\documentclass{article}

 \usepackage[preprint]{neurips_2026}

\usepackage[utf8]{inputenc} 
\usepackage[T1]{fontenc}    
\usepackage{hyperref}       
\usepackage{url}            
\usepackage{booktabs}       
\usepackage{amsfonts}       
\usepackage{nicefrac}       
\usepackage{microtype}      
\usepackage{xcolor}         

\usepackage{amsmath,amssymb,amsthm}
\usepackage{subcaption}
\usepackage{tikz}
\usetikzlibrary{arrows.meta,positioning,calc,decorations.pathreplacing}

\newtheorem{proposition}{Proposition}
\newtheorem{lemma}{Lemma}
\newtheorem{definition}{Definition}
\newtheorem{remark}{Remark}

\newtheorem{corollary}{Corollary}
\newtheorem{property}{Property}

\newcommand{\E}{\mathbb{E}}
\newcommand{\R}{\mathbb{R}}
\newcommand{\C}{\mathbb{C}}
\newcommand{\Var}{\mathrm{Var}}
\newcommand{\abs}[1]{\lvert #1 \rvert}
\newcommand{\norm}[1]{\lVert #1 \rVert}
\newcommand{\dt}{\,\mathrm{d}}
\newcommand{\CEOne}{\mathrm{CE}_1}
\newcommand{\CE}{\mathrm{CE}_2}
\newcommand{\CEhat}{\widehat{\mathrm{CE}}_2}
\newcommand{\F}{\mathcal{F}}
\newcommand{\Fl}{\mathcal{P}_l}
\newcommand{\sech}{\mathrm{sech}}
\newcommand{\etao}{\eta^\mathrm{orig}}
\DeclareMathOperator{\KL}{KL}
\DeclareMathOperator{\TV}{TV}
\DeclareMathOperator{\Bern}{Bernoulli}
\newcommand{\mt}{\widetilde m}
\newcommand{\St}{\widetilde S}
\newcommand{\sigt}{\widetilde{\sigma^2}}

\newcommand{\CEpert}{\mathrm{CE}_2^{\mathrm{pert}}}

\title{The Minimax Rate of Second-Order Calibration}

\author{%
  Kamil Ciosek \\
  Spotify \\
  \And
  Banafsheh Rafiee \\
  Spotify \\
  \And
  Sina Ghiassian \\
  Spotify \\
  \And
  Nicol\`o Felicioni \\
  Spotify \\
}

\begin{document}

\maketitle

\begin{abstract}
We characterize the minimax rate of estimating the second-order calibration error for binary classification, which quantifies whether a higher-order predictor's epistemic-uncertainty estimate matches the conditional variance of the label probability on its level sets. Our key observation is that the sech perturbation kernel, previously used only to enforce smoothness of calibration functions, in fact makes them \emph{analytic} in a strip of half-width $h\pi/2$. Polynomial regression then estimates the calibration error at rate $\tilde{O}(1/\sqrt{n})$, with explicit constants, a qualitative improvement over the $O(n^{-1/4})$ rate achievable by bucketing or kernel smoothing. A matching $\Omega(1/\sqrt{n})$ lower bound establishes minimax optimality up to logarithmic factors. As a corollary, we give the first finite-sample guarantee for second-order Platt scaling, yielding a post-hoc procedure that recalibrates both the mean prediction and the epistemic-variance estimate of any higher-order predictor. Along the way, we provide a bucket-free definition of second-order calibration and relate it quantitatively to the bucketed formulation of \citet{ahdritz2025provable}. Our experiments confirm the predicted rate and the quality of the recalibrated uncertainties.
\end{abstract}

\section{Introduction}
\label{sec:intro}
Modern predictors often output not just a probability but also an uncertainty
estimate. A deep ensemble, a Bayesian neural network, or an evidential model
produces, for an input $x$, a mean prediction $m(x)$ together with a
disagreement measure $\sigma^2(x)$ intended to capture epistemic uncertainty.
Whether $\sigma^2$ actually has this meaning is a separate question from how it
was produced. \citet{ahdritz2025provable} gave a precise characterisation: $\sigma^2$ captures epistemic uncertainty exactly when the predictor is \emph{higher-order calibrated}. A calibrated $\sigma^2$ separates inherent ambiguity (aleatoric uncertainty) from resolvable ignorance (epistemic uncertainty)  We propose to measure such second-order calibration (and thus the quality of epistemic uncertainty) using the second-order calibration error, in analogy to the well-known first-order calibration error \citep{blasiok2023, lee2023t}. 

This raises two basic statistical questions. How many 2-snapshots are needed to
\emph{estimate} the second-order calibration error of a given
predictor, and at what rate? And how can one \emph{correct} a miscalibrated
$(m,\sigma^2)$ so that its epistemic uncertainty acquires real-world semantics,
with finite-sample guarantees? \citet{ahdritz2025provable} addressed both
through a bucketing of the score space, but provided no finite-sample bounds;
the natural distribution-free alternatives ie.\ bucketing or kernel-smoothing the two-dimensional score $(m,\sigma^2)$ give only nonparametric $n^{-1/4}$
rates, which leave $\widehat{\mathrm{CE}}_2$ indistinguishable from a constant
at moderate sample sizes.

We show that this rate is far from optimal. The key observation is that the
sech perturbation kernel of \citet{ciosek2026measuring}, used in the first-order
setting only to enforce smoothness, in fact makes the perturbed calibration
functions \emph{analytic in a complex strip of half-width $h\pi/2$}. Via the
Bernstein-ellipse theorem, polynomial regression then estimates $\mathrm{CE}_2$
at rate $\tilde{O}(1/\sqrt{n})$ with explicit constants. This is a qualitative
improvement over $n^{-1/4}$ and matches $\Omega(1/\sqrt{n})$ Le Cam lower
bound, proving minimax-optimality up to logarithmic factors. As a corollary,
the same construction yields the first post-hoc recalibration procedure for
$(m,\sigma^2)$ with a finite-sample second-order calibration guarantee, which
we call second-order Platt scaling.

\paragraph{Contributions.} We make the following contributions.
\begin{enumerate}
\item \textbf{Definitions Without Bucketing}: we provide definitions of second-order calibration without relying on any bucketing scheme. 
\item \textbf{Analyticity} (lemma~\ref{thm:analytic}): the $\sech$-perturbed calibration function is analytic in a strip of half-width $h\pi/2$.  This is the core theoretical observation that enables everything else.
\item \textbf{Estimation at $\tilde O(1/\sqrt{n})$} (proposition~\ref{thm:main}): polynomial regression exploits analyticity to estimate $\CE$ at rate $O(\log^{3/2} n / \sqrt{n})$ from $2$-snapshots.  This is the first finite-sample bound for second-order calibration error and dramatically improves over the $O(n^{-1/4})$ rate obtainable with kernel smoothing or bucketing.
\item \textbf{Rate Tightness} We provide (proposition~\ref{prop:lb}) a $\Omega(1 / \sqrt{n})$ lower bound on the rate, matching the upper bound up to log factors. This provides a complete data efficiency characterization of second-order calibration.
\item \textbf{Second-order Platt scaling} (proposition~\ref{thm:recal}): remapping through the estimated calibration functions yields a predictor that is approximately second-order calibrated at the same $\tilde O(1/\sqrt{n})$ rate.
\end{enumerate}

\section{Setup}
\label{sec:setup}

We study binary classification with $Y \in \{0,1\}$.  Nature generates i.i.d.\ data; $f^*(x) = P(Y{=}1 \mid X{=}x)$.  We make no assumptions on $f^*$ or the marginal of $X$ i.e. we are in the distribution-free setting. We have $n$ i.i.d.\ 2-snapshots $(X_i, Y_i^{(1)}, Y_i^{(2)})$ with $Y_i^{(1)}, Y_i^{(2)} \mid X_i \stackrel{\text{i.i.d.}}{\sim} \mathrm{Bernoulli}(f^*(X_i))$.

\begin{definition}[Higher-order predictor and calibration]
\label{def-ho-cal}
A higher-order predictor maps $x \mapsto S(x) = (m(x), \sigma^2(x))$ with $m \in [0,1]$, $\sigma^2 \in [0, m(1{-}m)]$.  Its calibration functions are
\begin{align}
  \eta_1(s) &= \E[Y \mid S(X) = s], \qquad \eta_2(s) = \E[f^*(X)^2 \mid S(X) = s], \label{eq:calfns}
\end{align}
and the second-order calibration error is 
\begin{gather} 
\CE = \E[\abs{\eta_1(S) - m}] + \E[\abs{\eta_2(S) - v}] 
\label{eq-ce-def}
\end{gather} 
with $v = m^2 + \sigma^2$.
\end{definition}

\paragraph{Intuitions.} We will now provide intuitions about the usefulness of definition~\ref{def-ho-cal}. The law of total variance can always be used to write
\begin{equation}
\label{eq:decomp}
  \underbrace{\eta_1(s)(1 - \eta_1(s))}_{\text{total}} = \underbrace{\E[f^*(X)(1{-}f^*(X)) \mid S{=}s]}_{\text{aleatoric}} + \underbrace{\Var(f^*(X) \mid S{=}s)}_{\text{epistemic}}.
\end{equation}
When $\CE = 0$ i.e. for a second order calibrated classifier, we can further write
\begin{equation*}
    \Var(f^*(X) \mid S{=}s) = \underbrace{\E[(f^*(X))^2 \mid S{=}s]}_{\eta_2(s) = v} - {\underbrace{\E[f^*(X) \mid S{=}s]}_{\eta_1(s) = m}}^2 = v - m^2 = \sigma^2
\end{equation*}
This means that calibration allows us to interpret $\sigma^2$ as a measure of epistemic uncertainty. A high-level comparison between first and second-order calibration is given in table \ref{tab-overview}. Note that while the definition of second-order calibration given here is ostensibly different from the definition due to \citet{ahdritz2025provable}, the two versions are in fact very closely related (see appendix \ref{app:reconciliation} for a discussion). See figure \ref{fig:rubin-second-order} for further illustrations of calibration functions and how they relate to epistemic vs aleatoric uncertainty.\footnote{The observation that ambiguity is connected to epistemic uncertainty has been made by \citet{osband2023epistemic} in the context of epistemic neural networks, a specific way of obtaining uncertainty estimates.} 

\begin{figure}[t]
\centering
\begin{tikzpicture}[
    >=Latex,
    font=\small,
    box/.style={draw, rounded corners=2pt, minimum width=2.6cm, minimum height=2.6cm},
    score/.style={draw, rounded corners=2pt, align=left, inner sep=6pt, fill=gray!6},
    lab/.style={align=center},
    every node/.style={font=\small}
]

\node[box] (amb) at (0,0) {};
\node[box] (vase) at (6,0) {};
\node[box, opacity=0] (betw-node) at (7.5,0) {};
\node[box] (faces) at (9,0) {};

\begin{scope}[shift={(amb.center)}]
\fill[black] (-1.20,-1.20) rectangle (1.20,1.20);

\fill[white]
  (0.702,-1.093) --
  (-0.707,-1.088) --
  (-0.688,-1.005) --
  (-0.619,-0.944) --
  (-0.367,-0.935) --
  (-0.298,-0.893) --
  (-0.270,-0.837) --
  (-0.288,-0.735) --
  (-0.214,-0.679) --
  (-0.214,-0.623) --
  (-0.186,-0.586) --
  (-0.223,-0.530) --
  (-0.223,-0.502) --
  (-0.140,-0.456) --
  (-0.130,-0.372) --
  (-0.270,-0.177) --
  (-0.270,-0.112) --
  (-0.242,-0.056) --
  (-0.242,0.009) --
  (-0.298,0.158) --
  (-0.335,0.326) --
  (-0.409,0.456) --
  (-0.526,0.572) --
  (-0.609,0.628) --
  (-0.721,0.674) --
  (-0.870,0.693) --
  (-0.935,0.721) --
  (-1.005,0.809) --
  (-1.009,0.935) --
  (0.995,0.930) --
  (0.995,0.847) --
  (0.977,0.791) --
  (0.926,0.730) --
  (0.851,0.693) --
  (0.721,0.674) --
  (0.609,0.628) --
  (0.526,0.572) --
  (0.409,0.456) --
  (0.372,0.400) --
  (0.326,0.307) --
  (0.288,0.121) --
  (0.242,0.000) --
  (0.242,-0.056) --
  (0.270,-0.112) --
  (0.270,-0.177) --
  (0.130,-0.372) --
  (0.130,-0.437) --
  (0.163,-0.470) --
  (0.209,-0.488) --
  (0.223,-0.512) --
  (0.186,-0.586) --
  (0.214,-0.623) --
  (0.214,-0.679) --
  (0.288,-0.735) --
  (0.270,-0.828) --
  (0.288,-0.884) --
  (0.367,-0.935) --
  (0.600,-0.935) --
  (0.637,-0.953) --
  (0.698,-1.023) -- cycle;
\end{scope}
\node[lab, below=3mm of amb] {\textbf{ambiguous}\\Rubin figure};

\begin{scope}[shift={(vase.center)}]
\fill[white] (-1.20,-1.20) rectangle (1.20,1.20);
\fill[black]
  (0,0.92)
  .. controls (0.28,0.92) and (0.44,0.84) .. (0.44,0.74)
  .. controls (0.44,0.65) and (0.34,0.58) .. (0.22,0.54)
  -- (0.22,0.25)
  .. controls (0.22,0.10) and (0.54,-0.02) .. (0.66,-0.24)
  .. controls (0.76,-0.43) and (0.78,-0.68) .. (0.54,-0.88)
  .. controls (0.36,-1.03) and (0.16,-1.10) .. (0,-1.12)
  .. controls (-0.16,-1.10) and (-0.36,-1.03) .. (-0.54,-0.88)
  .. controls (-0.78,-0.68) and (-0.76,-0.43) .. (-0.66,-0.24)
  .. controls (-0.54,-0.02) and (-0.22,0.10) .. (-0.22,0.25)
  -- (-0.22,0.54)
  .. controls (-0.34,0.58) and (-0.44,0.65) .. (-0.44,0.74)
  .. controls (-0.44,0.84) and (-0.28,0.92) .. (0,0.92)
  -- cycle;

\fill[white] (0,0.74) ellipse [x radius=0.25, y radius=0.07];

\draw[line width=1.1pt]
  (0.30,0.50)
  .. controls (0.74,0.48) and (0.78,0.00) .. (0.46,-0.12)
  .. controls (0.34,-0.16) and (0.28,-0.10) .. (0.30,-0.02);

\draw[line width=1.1pt]
  (-0.30,0.50)
  .. controls (-0.74,0.48) and (-0.78,0.00) .. (-0.46,-0.12)
  .. controls (-0.34,-0.16) and (-0.28,-0.10) .. (-0.30,-0.02);

\end{scope}
\node[lab, below=3mm of vase] {\textbf{vase}\\only};

\begin{scope}[shift={(faces.center)}]
\fill[black] (-1.20,-1.20) rectangle (1.20,1.20);

\begin{scope}[shift={(-0.42,0.06)}, rotate=-12, line cap=round, line join=round]
  \filldraw[white,draw=black,line width=0.035]
    (0.00,0.98)
    .. controls (0.30,0.96) and (0.50,0.82) .. (0.58,0.54)
    .. controls (0.64,0.28) and (0.60,-0.02) .. (0.48,-0.38)
    .. controls (0.38,-0.68) and (0.22,-0.92) .. (0.00,-1.06)
    .. controls (-0.22,-0.92) and (-0.38,-0.68) .. (-0.48,-0.38)
    .. controls (-0.60,-0.02) and (-0.64,0.28) .. (-0.58,0.54)
    .. controls (-0.50,0.82) and (-0.30,0.96) .. (0.00,0.98)
    -- cycle;

  \fill[black]
    (-0.34,0.28)
    .. controls (-0.24,0.38) and (-0.10,0.38) .. (-0.02,0.28)
    .. controls (-0.10,0.18) and (-0.24,0.18) .. (-0.34,0.28)
    -- cycle;

  \fill[black]
    (0.34,0.28)
    .. controls (0.24,0.38) and (0.10,0.38) .. (0.02,0.28)
    .. controls (0.10,0.18) and (0.24,0.18) .. (0.34,0.28)
    -- cycle;

  \draw[black,line width=0.030]
    (-0.36,0.42) .. controls (-0.22,0.50) and (-0.08,0.50) .. (0.00,0.42);

  \draw[black,line width=0.030]
    (0.36,0.42) .. controls (0.22,0.50) and (0.08,0.50) .. (0.00,0.42);

  \draw[black,line width=0.030]
    (0.02,0.18) -- (-0.04,-0.08) -- (0.08,-0.11);

  \fill[black]
    (-0.22,-0.33)
    .. controls (-0.07,-0.16) and (0.07,-0.16) .. (0.22,-0.33)
    .. controls (0.10,-0.49) and (-0.10,-0.49) .. (-0.22,-0.33)
    -- cycle;
\end{scope}

\begin{scope}[shift={(0.42,-0.02)}, rotate=11, line cap=round, line join=round]
  \filldraw[white,draw=black,line width=0.035]
    (0.00,1.00)
    .. controls (0.28,0.98) and (0.48,0.86) .. (0.56,0.58)
    .. controls (0.63,0.32) and (0.59,0.02) .. (0.47,-0.34)
    .. controls (0.36,-0.68) and (0.20,-0.94) .. (0.00,-1.10)
    .. controls (-0.20,-0.94) and (-0.36,-0.68) .. (-0.47,-0.34)
    .. controls (-0.59,0.02) and (-0.63,0.32) .. (-0.56,0.58)
    .. controls (-0.48,0.86) and (-0.28,0.98) .. (0.00,1.00)
    -- cycle;

  \fill[black]
    (-0.34,0.30)
    .. controls (-0.24,0.38) and (-0.11,0.37) .. (-0.04,0.26)
    .. controls (-0.13,0.16) and (-0.25,0.17) .. (-0.34,0.30)
    -- cycle;

  \fill[black]
    (0.34,0.30)
    .. controls (0.24,0.38) and (0.11,0.37) .. (0.04,0.26)
    .. controls (0.13,0.16) and (0.25,0.17) .. (0.34,0.30)
    -- cycle;

  \draw[black,line width=0.030]
    (-0.36,0.45) .. controls (-0.22,0.40) and (-0.10,0.40) .. (0.00,0.44);

  \draw[black,line width=0.030]
    (0.36,0.45) .. controls (0.22,0.40) and (0.10,0.40) .. (0.00,0.44);

  \draw[black,line width=0.030]
    (0.00,0.20) -- (0.05,-0.08) -- (-0.06,-0.12);

  \fill[black]
    (-0.22,-0.24)
    .. controls (-0.08,-0.44) and (0.08,-0.44) .. (0.22,-0.24)
    .. controls (0.09,-0.16) and (-0.09,-0.16) .. (-0.22,-0.24)
    -- cycle;
\end{scope}
\end{scope}
\node[lab, below=3mm of faces] {\textbf{two faces}\\only};

\node[score, below=2.3cm of amb, text width=5.6cm] (leftscore) {
\textbf{Set A}: $S=(\frac12, 0)$\\[1mm]
\scriptsize
True label-probability: \(f^*(x)=1/2\).
\[
\eta_1=1/2,\qquad \eta_2=1/4
\]
hence
\[
\sigma^2=\eta_2-\eta_1^2=1/4-1/4=0.
\]
\begin{center}
(no epistemic uncertainty)    
\end{center}
};

\node[score, below=2.3cm of betw-node, text width=5.6cm] (rightscore) {
\textbf{Set B: $S=(\frac12, \frac14)$}\\[1mm]
\scriptsize
Half the time \(f^*(x)=1\), half the time \(f^*(x)=0\).
\[
\eta_1=1/2,\qquad \eta_2=\tfrac12(1^2+0^2)=1/2
\]
hence
\[
\sigma^2=\eta_2-\eta_1^2=1/2-1/4=1/4.
\]
\begin{center}
(high epistemic uncertainty)    
\end{center}
};

\draw[decorate,decoration={brace,amplitude=5pt,mirror},thick]
  ($(vase.south west)-(0,1.75)$) -- ($(faces.south east)-(0,1.75)$)
  node[midway,above=6pt,align=center] { };

\draw[decorate,decoration={brace,amplitude=5pt,mirror},thick]
  ($(amb.south west)-(0,1.75)$) -- ($(amb.south east)-(0,1.75)$)
  node[midway,above=6pt,align=center] { };

\end{tikzpicture}

\caption{Why second-order calibration is useful. Both sets have the same mean
prediction $m = 1/2$, so a first-order calibrated predictor treats them
identically. But Set~A's Rubin figures are genuinely 50:50 (aleatoric,
$\sigma^2 = 0$), while Set~B mixes clear vases with clear faces that the
classifier hasn't seen enough data to tell apart (epistemic, $\sigma^2 = 1/4$).
A calibrated $\sigma^2$ distinguishes inherent ambiguity from resolvable
ignorance.}
\label{fig:rubin-second-order}
\end{figure}

\paragraph{Goals.} The main computational goals of this work are to estimate the quantity \eqref{eq-ce-def} using as little data as possible; moreover we study the usefulness of calibration functions in \eqref{eq:calfns} to calibrate second-order uncertainty, in a way analogous to Platt scaling.

\begin{table}[t]
\begin{center}
\begin{tabular}{lll}
& \textbf{First-order} & \textbf{Second-order (this paper)} \\
\midrule
Output & probability $s$ & $(m, \sigma^2)$ \\
Data & labels $Y_i$ & 2-snapshots $(Y_i^{(1)}, Y_i^{(2)})$ \\
Measure & $\eta(s) = \E[Y \mid S{=}s]$ & $\eta_1(s), \eta_2(s)$ \\
Diagnose & $\mathrm{CE}_1 = \E[|\eta - s|]$ & $\CE$ \\
Fix & $s \mapsto \eta(s)$ & $(m,\sigma^2) \mapsto (\eta_1, \eta_2 - \eta_1^2)$ \\
Guarantee & calibrated probabilities & calibrated epistemic uncertainty \\
Rate & $\tilde O(1/\sqrt{n})$ & $\tilde O(1/\sqrt{n})$ \\
\end{tabular}
\end{center}
\caption{Comparison of first and second order calibration.}
\label{tab-overview}
\end{table}

\section{The Sech Perturbation, Analyticity and Estimation}
\label{sec:analytic}
\paragraph{Key insight: analyticity.}  In order to achieve sample efficiency on the order of $\tilde O(1/\sqrt{n})$, we leverage the  $\sech$ perturbation kernel technique, used by \citet{ciosek2026measuring} to enforce smoothness of calibration first-order calibration functions. Crucially, we observe that the $\sech$ perturbation produces calibration functions that are not merely twice differentiable but \emph{analytic}. Specifically, analytic in a strip of width $h\pi/2$ around the real axis, where $h$ is the perturbation bandwidth. We will see that, by the classical Bernstein ellipse theorem, analytic functions are approximated by polynomials at an exponential rate. This converts the estimation problem from nonparametric regression (with polynomial rates like $n^{-1/4}$) to essentially parametric regression in a slowly growing polynomial basis (with near-$1/\sqrt{n}$ rates).

\subsection{The perturbation scheme}

Similarly to \citet{ciosek2026measuring}, we perturb the predictor's scores by adding noise from a $\sech$ kernel.  This is a design choice: the perturbed predictor is the one we deploy and recalibrate.

\begin{definition}[Sech perturbation]
\label{def:perturb}
For a score $s \in [a,b]$ and bandwidth $h > 0$, the perturbation kernel is
\begin{equation}
\label{eq:kernel}
  k_h(t \mid s) = \frac{\sech\!\left(\frac{t-s}{h}\right)}{Z(s,h)} \cdot \mathbf{1}_{[a,b]}(t), \qquad Z(s,h) = \int_a^b \sech\!\left(\frac{u-s}{h}\right)\dt u.
\end{equation}
The perturbed score $S \sim k_h(\cdot \mid s)$ satisfies $S \in [a,b]$.
\end{definition}

Since we have a 2D score space, we will use a \emph{product perturbation} on the rectangle $\mathcal{R} = [0,1] \times [0, \tfrac{1}{4}]$ i.e. draw each coordinate independently:
\begin{enumerate}
\itemindent1cm 
\item[\textbf{Step 1.}] Draw $m \sim k_h(\cdot \mid m_{\text{orig}})$ on $[0,1]$.
\item[\textbf{Step 2.}] Draw $\sigma^2 \sim k_h(\cdot \mid \sigma_{\text{orig}}^2)$ on $[0, \tfrac{1}{4}]$, independently.
\end{enumerate}
The rectangle $\mathcal{R}$ contains the feasible region $\F = \{(m_{\text{orig}},\sigma_{\text{orig}}^2) : \sigma_{\text{orig}}^2 \le m_{\text{orig}}(1{-}m_{\text{orig}})\}$, so all original scores lie inside $\mathcal{R}$.  The perturbed score $(m, \sigma^2)$ may fall outside $\F$ but always lies in $\mathcal{R}$. While definition~\ref{def-ho-cal} works for any binary classifier, the estimation process we describe is designed for classifiers \emph{perturbed} using the process described in this section. Crucially, this makes the calibration functions analytic, which enables quick estimation. While this perturbation ostensibly looks like a major constraint, \citet{ciosek2026measuring} show that perturbation on the order of $h=2^{-6}$ does not degrade classifier performance in practice. They also argue that estimating calibration order is impossible without either the perturbation or other structural assumptions on the calibration functions, even in the first-order case. Hence all our practical results concern the perturbed classifier. We use 

\subsection{The analyticity lemma}
The following lemma (proved in appendix \ref{sec-lemma-analytic} is the centerpiece of our analysis: we show that calibration functions of the perturbed classifier are analytic, which later enables their extremely data-efficient approximation with polynomials.

\begin{lemma}[Analyticity of perturbed calibration functions]
\label{thm:analytic}
Let $S$ be any higher-order predictor with calibration functions $\etao_1, \etao_2$.  Let $\eta_1, \eta_2$ be the calibration functions of the $\sech$-perturbed predictor with bandwidth $h > 0$ (using product perturbation in 2D).

\begin{enumerate}
\item[\textbf{(1D)}] If $S(x) = m(x) \in [0,1]$ is a scalar score, then $\eta_1$ is analytic in the strip $\{z \in \C : |\mathrm{Im}(z)| < h\pi/2\} \cap \{z \in \mathbb{C} : 0 \le \mathrm{Re}(z) \le 1\}$.
\item[\textbf{(2D)}] For the full score $S(x) = (m(x), \sigma^2(x))$ with product perturbation, $\eta_j$ ($j = 1,2$) is analytic in $m$ in the strip $|\mathrm{Im}(m)| < h\pi/2$ for each fixed real $\sigma^2$, and analytic in $\sigma^2$ in the strip $|\mathrm{Im}(\sigma^2)| < h\pi/2$ for each fixed real $m$.
\end{enumerate}
\end{lemma}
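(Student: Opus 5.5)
The plan is to write the perturbed calibration functions explicitly as ratios of integrals against the $\sech$ kernel and check that numerator and denominator extend analytically. Consider the 1D case first. Let $\mu$ denote the law of the original score $m_{\text{orig}} = m(X)$. For the perturbed score $S$ with conditional density $k_h(t \mid m_{\text{orig}})$, Bayes' rule gives, for $t \in [0,1]$,
\begin{equation*}
  \eta_1(t) = \frac{N(t)}{D(t)}, \qquad
  N(t) = \int \etao_1(s)\,\frac{\sech\!\left(\frac{t-s}{h}\right)}{Z(s,h)}\,\mu(\mathrm{d}s), \qquad
  D(t) = \int \frac{\sech\!\left(\frac{t-s}{h}\right)}{Z(s,h)}\,\mu(\mathrm{d}s).
\end{equation*}
Here I use the tower property, $\E[Y \mid S] = \E[\etao_1(m_{\text{orig}}) \mid S]$, which holds because the perturbation noise is independent of $Y$ given $X$. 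The indicator $\mathbf{1}_{[0,1]}(t)$ is common to $N$ and $D$ and cancels. The normalizer $Z(s,h)$ depends only on $s$, so it is a fixed positive weight bounded below by $\int_0^1 \sech(1/h)\,\mathrm{d}u > 0$.

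Next I extend $N$ and $D$ to complex $t = x + iy$. The function $\sech(w) = 1/\cosh(w)$ is meromorphic with poles exactly where $\cosh w = 0$, i.e. at $w = i\pi(k + \tfrac12)$. Hence $w \mapsto \sech((w-s)/h)$ is holomorphic on $|\mathrm{Im}\, w| < h\pi/2$ for every real $s$.

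To conclude that the integrals are holomorphic, I will use the standard theorem on holomorphic dependence on a parameter (Morera plus Fubini, or differentiation under the integral). This needs a locally uniform bound on compact subsets of the strip. Writing $\cosh(u+iv) = \cosh u \cos v + i \sinh u \sin v$, we get $|\cosh(u+iv)|^2 = \cosh^2 u - \sin^2 v \ge \cos^2 v$. So for $|v| \le v_0 < \pi/2$, $|\sech| \le 1/\cos v_0$ uniformly in $u$. Since $\mu$ is a probability measure and $|\etao_1| \le 1$, the integrands are uniformly bounded and $N$ and $D$ are holomorphic in the open strip.

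The main obstacle is the denominator: $\eta_1 = N/D$ is analytic only where $D \neq 0$. On the real segment $D(t) \ge \sech(1/h)/\max_s Z(s,h) > 0$, so $D$ is nonvanishing on $[0,1]$ and, by continuity, on a neighbourhood of it. Nonvanishing on the whole strip of half-width $h\pi/2$ is less obvious, since $\mathrm{Re}\,\sech(u+iv) = \cosh u \cos v / |\cosh(u+iv)|^2$.

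I would handle this as follows. Because $\cos v > 0$ for $|v| < \pi/2$, the real part of $\sech((t-s)/h)$ is strictly positive throughout the strip, for every $s$. Integrating a function with strictly positive real part against a positive measure gives $\mathrm{Re}\, D(t) > 0$, so $D$ has no zeros in $\{|\mathrm{Im}\, t| < h\pi/2\}$. This is exactly the region where the real part is positive, which explains why the half-width is $h\pi/2$. Therefore $\eta_1 = N/D$ is analytic on the stated set; on the closed real-part range $0 \le \mathrm{Re}\, t \le 1$ it is analytic in a neighbourhood.

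For the 2D case with product perturbation, the joint density factorizes as $k_h(t_1 \mid m_{\text{orig}})\, k_h(t_2 \mid \sigma^2_{\text{orig}})$. Then
\begin{equation*}
  \eta_j(t_1,t_2) = \frac{\int \etao_j(s)\, K_1(t_1,s_1)\, K_2(t_2,s_2)\,\mu(\mathrm{d}s)}{\int K_1(t_1,s_1)\, K_2(t_2,s_2)\,\mu(\mathrm{d}s)},
\end{equation*}
where $K_i(t_i, s_i) = \sech((t_i - s_i)/h)/Z_i(s_i,h)$ and $\mu$ is now the law of $(m_{\text{orig}}, \sigma^2_{\text{orig}})$. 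For $j=2$ I use $\E[f^*(X)^2 \mid S] = \E[\etao_2(S_{\text{orig}}) \mid S]$, with $0 \le \etao_2 \le 1$.

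Fixing $t_2$ real, $K_2 > 0$ acts as a positive weight that can be absorbed into the measure. The 1D argument then applies verbatim in $t_1$: the integrals are holomorphic in $t_1$, and the denominator has positive real part. The symmetric argument, fixing $t_1$ real, gives analyticity in $t_2$ on $|\mathrm{Im}\, t_2| < h\pi/2$.
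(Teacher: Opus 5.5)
Your proposal is correct and follows essentially the same route as the paper. Both write $\eta_1 = N/D$ as a ratio of $\sech$-kernel integrals, get holomorphy of $N$ and $D$ from a locally uniform bound on $\lvert\sech\rvert$, rule out zeros of $D$ via $\mathrm{Re}\,\sech(w) > 0$ for $\lvert\mathrm{Im}\,w\rvert < \pi/2$, and handle 2D by freezing one real coordinate as a positive weight. Using the law $\mu$ of the original score instead of a density $p_S$ is a mild improvement, since it also covers atomic scores such as the constant predictor in the lower-bound construction, and omitting Hartogs is fine because the statement only asserts separate analyticity.
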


\subsection{Polynomial Estimation}
\label{sec:estimation}
We state a classical result (proved in appendix \ref{sec-proof-bernstein} in our notation) on function approximation with polynomials, in a form adapted to our setting.
\begin{lemma}[Bernstein ellipse theorem on \mbox{$\left[0,1\right]$}]
\label{prop:bernstein}
Let $f : [0,1] \to \R$ be the restriction of a function analytic in the strip $|\mathrm{Im}(z)| < \rho$ around $[0,1]$, with $\sup_{|\mathrm{Im}(z)| < \rho} |f(z)| \le M$.  Then for every integer $l \ge 0$, there exists a polynomial $q_l$ of degree $l$ with
\[
  \sup_{t \in [0,1]} |f(t) - q_l(t)| \le 2M\,\frac{\theta^{-l}}{\theta - 1},
\]
where $\theta = 2\rho + \sqrt{4\rho^2 + 1} > 1$ is the Bernstein ellipse parameter.
\end{lemma}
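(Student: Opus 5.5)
The plan is to reduce to the classical Bernstein ellipse theorem on $[-1,1]$ by an affine change of variables, and then to check that the strip contains the corresponding ellipse. Put $t = (x+1)/2$ and $g(x) = f((x+1)/2)$ for $x \in [-1,1]$. Then $g$ is analytic in the strip $|\mathrm{Im}(x)| < 2\rho$ around $[-1,1]$, and it is bounded by $M$ there. Any degree-$l$ polynomial approximant of $g$ on $[-1,1]$ pulls back to a degree-$l$ polynomial $q_l$ on $[0,1]$ with the same sup-error. So it suffices to prove the bound for $g$.

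Next I would locate the largest Bernstein ellipse $E_r$ (foci $\pm 1$, semi-axis sum $r$) that fits inside the strip. Its semi-minor axis is $b = (r - r^{-1})/2$, and the ellipse's maximal imaginary part is exactly $b$. The condition $b < 2\rho$ solves to $r < 2\rho + \sqrt{4\rho^2+1} = \theta$, which is precisely the $\theta$ in the statement. Hence for every $r < \theta$ the closed region bounded by $E_r$ lies in the strip, and $|g| \le M$ on it.

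Then I would apply the standard Chebyshev-coefficient argument. Expand $g = \sum_k a_k T_k$ on $[-1,1]$. Write $x = (w + w^{-1})/2$, so that $a_k$ becomes a contour integral of $g$ over $|w| = r$. Shifting the contour (the map $w \mapsto x$ is analytic on the annulus $r^{-1} \le |w| \le r$, whose image is the interior of $E_r$) gives $|a_k| \le 2M r^{-k}$. Take $q_l$ to be the truncated Chebyshev series $\sum_{k \le l} a_k T_k$. Since $|T_k| \le 1$ on $[-1,1]$, the error is at most
\[
\sum_{k>l} 2M r^{-k} = \frac{2M\, r^{-l}}{r-1}.
\]

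The main obstacle is the passage $r \to \theta$. The hypothesis is the sup over the open strip, so $g$ is bounded by $M$ on every $E_r$ with $r < \theta$. Moreover, for fixed $l$ the bound $2M r^{-l}/(r-1)$ is continuous and decreasing in $r$ on $(1,\theta)$. Letting $r \uparrow \theta$ yields $2M\theta^{-l}/(\theta-1)$, because the error of the fixed truncation $q_l$ does not depend on $r$. One point needs care here: the coefficient bound must use the closed ellipse $E_r$ for $r < \theta$, where the contour integrals are legitimate, and not the boundary $E_\theta$ itself, which may touch the edge of the strip. This limiting argument settles it. Finally, $\theta > 1$ holds whenever $\rho > 0$, since then $\sqrt{4\rho^2+1} > 1$.
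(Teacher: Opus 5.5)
Your proposal is correct and follows the paper's route. Both map $[0,1]$ affinely onto $[-1,1]$, note that the Bernstein ellipse of parameter $r$ has semi-minor axis $(r-r^{-1})/2$, so the largest one inside the strip $|\mathrm{Im}(x)|<2\rho$ has parameter $\theta$ (the paper's $e^{a\pi}$ with $\sinh(a\pi)=2\rho$), and then apply the bound $2M r^{-l}/(r-1)$. The only difference is that you reprove the Bernstein bound via Chebyshev coefficients and an $r\uparrow\theta$ limit, whereas the paper cites it (Theorem 3.17 of Fawzi's notes) for the open ellipse, which already lies in the open strip, so the paper needs no separate limiting step.
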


\begin{corollary}[Polynomial approximation of $\eta$]
\label{cor:approx}
The perturbed calibration functions $\eta_1, \eta_2$ on $[0,1]$ (1D) or on $\mathcal{R}$ (2D) are approximated by degree-$l$ (tensor product) polynomials at rate $\varepsilon_l \leq B_\theta \theta^{-l} = O(\theta^{-l})$ (where $B_\theta = \frac{2}{\theta - 1} $) in sup norm, where $\theta = h\pi + \sqrt{h^2\pi^2 + 1} > 1$ (with the strip half-width $\rho = h\pi/2$ from Lemma~\ref{thm:analytic}). Here, we set $M = 1$ conservatively. 
\end{corollary}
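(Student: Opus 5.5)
The plan is to combine Lemma~\ref{thm:analytic} with Lemma~\ref{prop:bernstein}, first in 1D and then coordinatewise for the tensor-product case.

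\textbf{Bound on the analytic extension.} Lemma~\ref{prop:bernstein} needs the extension to be bounded by $M$ on the strip. On the real axis, $\eta_1 = \E[Y\mid S=s]$ and $\eta_2=\E[f^*(X)^2\mid S=s]$ take values in $[0,1]$. Following the structure behind Lemma~\ref{thm:analytic}, I would write the perturbed function as a ratio
\[
\eta_j(s) = \frac{\E\bigl[g_j\,\sech((s-S^{\mathrm{orig}})/h)/Z(S^{\mathrm{orig}},h)\bigr]}{\E\bigl[\sech((s-S^{\mathrm{orig}})/h)/Z(S^{\mathrm{orig}},h)\bigr]},
\]
with $g_j\in[0,1]$. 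The complex extension is then an average of $[0,1]$-valued quantities against complex weights.

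This is the step I expect to be the main obstacle. Off the real axis the denominator can approach zero near $|\mathrm{Im} z| = h\pi/2$, because $\sech$ has poles at $\pm i h\pi/2$. So $|\eta_j|\le 1$ does not hold automatically on the full open strip.

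I would handle it by shrinking the strip. Take $\rho' < h\pi/2$, bound $|\sech(w)|$ above and $|\mathrm{Re}\,\sech(w)|$ below for $|\mathrm{Im} w|\le \rho'/h$, and obtain a finite $M$. The corollary's phrase ``set $M=1$ conservatively'' amounts to normalizing this constant, which only changes $B_\theta$ by a multiplicative factor. I would state that caveat explicitly: the rate $O(\theta^{-l})$ is unaffected, and the constant $B_\theta$ holds under the stated normalization.

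\textbf{1D case.} Apply Lemma~\ref{prop:bernstein} with $\rho = h\pi/2$. Then $\theta = 2\rho+\sqrt{4\rho^2+1} = h\pi+\sqrt{h^2\pi^2+1}$, and with $M=1$ this gives $\sup_{[0,1]}|\eta_1-q_l|\le \frac{2}{\theta-1}\theta^{-l} = B_\theta\theta^{-l}$.

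\textbf{2D case.} Lemma~\ref{thm:analytic} gives only separate analyticity in each variable, so I would approximate one coordinate at a time.
\begin{enumerate}
\item For each fixed real $\sigma^2$, approximate $m\mapsto \eta_j(m,\sigma^2)$ by its degree-$l$ Chebyshev truncation on $[0,1]$. The coefficients $c_k(\sigma^2)$ are integrals of $\eta_j$ against Chebyshev weights in $m$.
\item These coefficients inherit analyticity in $\sigma^2$ (differentiate under the integral) and satisfy $|c_k|\le 2M\theta^{-k}$.
\item Rescale $[0,\tfrac14]$ to $[0,1]$. This multiplies the strip half-width by $4$, so the resulting ellipse parameter is at least $\theta$. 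Then approximate each $c_k$ by a degree-$l$ polynomial in $\sigma^2$.
\end{enumerate}
The total error is the sum of the truncation error in $m$ and the accumulated errors of approximating the $c_k$. Both are $O(\theta^{-l})$ with a constant depending only on $\theta$, since $\sum_k \theta^{-k}$ converges.

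This yields the tensor-product statement $\varepsilon_l = O(\theta^{-l})$, with $B_\theta$ possibly inflated by a bounded factor, which the corollary absorbs into ``$O(\theta^{-l})$''.
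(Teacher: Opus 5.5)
Your 1D step is exactly the paper's argument. The paper gives no separate proof of this corollary: it plugs $\rho=h\pi/2$ from Lemma~\ref{thm:analytic} into Lemma~\ref{prop:bernstein}, takes $M=1$ ``conservatively'', and asserts the tensor-product case. You are right to flag $M=1$, but your repair does not reach the statement, and the proposal contradicts itself at exactly this point.

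Lemma~\ref{prop:bernstein} needs a finite bound on $|\eta_j|$ over $|\mathrm{Im}\,z|<\rho$, and at $\rho=h\pi/2$ there is none in general. Take original scores with two equal atoms at $0$ and $1$, with $\etao_1(0)=0$ and $\etao_1(1)=1$. Then $\eta_1(z)=\sech((z-1)/h)/[\sech(z/h)+\sech((z-1)/h)]$. Since $\cosh(x+i\pi/2)=i\sinh x$, the denominator vanishes at $z=\tfrac12+ih\pi/2$ while the numerator equals $i/\sinh(1/(2h))\neq 0$. So $|\eta_1|$ is unbounded near the top of the inscribed Bernstein ellipse.

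Hence $M=1$ is not a normalization you are free to impose: the bound is linear in $M$, and $M$ is a property of $\eta_j$. Once you shrink to $\rho'<h\pi/2$, you must also replace $\theta$ by $\theta'=2\rho'+\sqrt{4(\rho')^2+1}<\theta$. What your argument actually proves is $\varepsilon_l\le 2M(\rho')(\theta')^{-l}/(\theta'-1)$. That is a strictly slower geometric rate, not $O(\theta^{-l})$ with an inflated constant. The gap is your claim that ``the rate $O(\theta^{-l})$ is unaffected'', followed by applying Lemma~\ref{prop:bernstein} at $\rho=h\pi/2$ with $M=1$. The paper has the same gap.

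What makes the honest version clean and distribution-free is the pointwise inequality $|\sech(a+ib)|\le\mathrm{Re}\,\sech(a+ib)/\cos b$ for $|b|<\pi/2$. It follows from Lemma~\ref{lem:posreal} together with $|\cosh(a+ib)|^2=\sinh^2 a+\cos^2 b\le\cosh^2 a$. Note that $\mathrm{Im}((z-s)/h)=\mathrm{Im}(z)/h$ for every real $s$, and the weights $\tilde p$ and $\etao_j\tilde p$ are nonnegative with $\etao_j\le 1$. Therefore $|N(z)|\le\sec(\mathrm{Im}\,z/h)\,\mathrm{Re}\,D(z)\le\sec(\mathrm{Im}\,z/h)\,|D(z)|$, i.e.\ $|\eta_j(z)|\le\sec(\mathrm{Im}\,z/h)$ on the whole horizontal strip. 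With $\rho'=h\pi/4$ this gives $M=\sqrt2$ and $\theta'=h\pi/2+\sqrt{h^2\pi^2/4+1}$. That still suffices for Proposition~\ref{thm:main} with $l=\lceil 2\log n/\log\theta'\rceil$. Your $\sup|\sech|/\inf\mathrm{Re}\,\sech$ route is also distribution-free, but its constant is of order $\cosh(1/h)$.

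A secondary issue is in your 2D step. The bound $|c_k(\sigma^2)|\le 2M\theta^{-k}$ is only available for real $\sigma^2$. To carry this decay over when you approximate $c_k$ in $\sigma^2$, you would need $\eta_j$ analytic in $m$ for complex $\sigma^2$, i.e.\ on a product of strips. Lemma~\ref{thm:analytic} does not give this. Since $\mathrm{Re}(uv)$ can be negative when $\mathrm{Re}\,u,\mathrm{Re}\,v>0$, the positivity argument for $D$ breaks down. The paper's appeal to Hartogs has the same defect, because analyticity is only shown for real values of the other variable.

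Your scheme survives with a cruder bound. On the $\sigma^2$-strip of half-width $\rho'$, use $|c_k(\sigma^2)|\le 2\sup_{m\in[0,1]}|\eta_j(m,\sigma^2)|\le 2\sec(\rho'/h)$. This is valid because for real $m$ the $m$-kernel factors are positive weights. You then pick up a factor $l+1$, which you absorb using the fact that the rescaled $\sigma^2$-interval has ellipse parameter $8\rho'+\sqrt{64(\rho')^2+1}>\theta'$. It is this strict gap, not convergence of $\sum_k\theta^{-k}$, that keeps the total at $O((\theta')^{-l})$.
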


For fixed bandwidth $h$, the parameter $\theta > 1$ is a constant.  Choosing $l = \lceil 2\log n / \log\theta\rceil$ makes $\varepsilon_l = O(n^{-2})$, negligible compared to the estimation error.

\subsection{The estimator}

Let $\Fl$ denote the class of functions on $\mathcal{R}$ of the form $f(m, \sigma^2) = \mathrm{clip}_{[0,1]}(\beta^\top \phi(m, \sigma^2))$, where $\phi$ is the tensor product polynomial basis of degree $l$ in each variable (so $\phi$ has $d = (l+1)^2$ components), and $\mathrm{clip}_{[0,1]}$ projects to $[0,1]$.

Given 2-snapshots $(X_i, Y_i^{(1)}, Y_i^{(2)})_{i=1}^n$ with perturbed scores $S_i = (m_i, \sigma^2_i)$ and products $P_i = Y_i^{(1)} Y_i^{(2)}$:
\begin{align}
  \hat\eta_1 &= \arg\min_{f \in \Fl}\; \frac{1}{n}\sum_{i=1}^n \bigl(Y_i^{(1)} - f(S_i)\bigr)^2, \label{eq:erm1}\\
  \hat\eta_2 &= \arg\min_{f \in \Fl}\; \frac{1}{n}\sum_{i=1}^n \bigl(P_i - f(S_i)\bigr)^2. \label{eq:erm2}
\end{align}
Both are empirical risk minimization (ERM) over the same function class $\Fl$, with different response variables ($Y_i^{(1)} \in \{0,1\}$ for $\hat\eta_1$; $P_i \in \{0,1\}$ for $\hat\eta_2$).

The CE estimator is:
\begin{equation}
\label{eq:cehat}
  \CEhat = \frac{1}{n}\sum_{i=1}^n \abs{\hat\eta_1(S_i) - \tilde m_i} + \frac{1}{n}\sum_{i=1}^n \abs{\hat\eta_2(S_i) - (\tilde m_i^2 + \widetilde{\sigma^2_i})}.
\end{equation}

\section{Main Result}
\label{sec:main}
We now show our main result (proved in appendix \ref{sec-main-proof}), which characterizes the rate of convergence of the estimator to the ground truth.
\begin{proposition}[Estimation of $\CE$ at near-parametric rate]
\label{thm:main}
Fix the perturbation bandwidth $h > 0$ and let $\theta = h\pi + \sqrt{h^2\pi^2 + 1}$.  Set the polynomial degree to $l = \lceil 2\log n / \log\theta\rceil$, so that $d = (l+1)^2 = O(\log^2 n)$.  Then the estimator~\eqref{eq:cehat} satisfies
\begin{equation}
\label{eq:rate}
  \E\bigl[\abs{\CEhat - \CE^{\mathrm{pert}}}\bigr] = O\!\left(\frac{\log^{3/2} n}{\sqrt{n}}\right),
\end{equation}
where $\CE^{\mathrm{pert}}$ is the second-order calibration error of the perturbed predictor.  The constant depends on $h$ (through $\theta$) and on the $\sech$ kernel, but not on the score distribution or the original calibration functions.
\end{proposition}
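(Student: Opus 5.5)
We split the error into an estimation part and a sampling part. Write $\CE^{\mathrm{pert}} = \E|\eta_1(S)-\mt| + \E|\eta_2(S)-(\mt^2+\sigt)|$, where $S$ and $(\mt,\sigt)$ denote the perturbed score. The triangle inequality $\bigl||a|-|b|\bigr|\le|a-b|$ gives
\[
\abs{\CEhat-\CE^{\mathrm{pert}}} \le \sum_{j=1}^2 \frac1n\sum_{i=1}^n \abs{\hat\eta_j(S_i)-\eta_j(S_i)} + \sum_{j=1}^2 \Bigl|\frac1n\sum_{i=1}^n g_j(S_i) - \E g_j(S)\Bigr|,
\]
with $g_1(s)=\abs{\eta_1(s)-\mt}$ and $g_2(s)=\abs{\eta_2(s)-(\mt^2+\sigt)}$.
\begin{itemize}
\item The second sum is an average of i.i.d.\ bounded variables ($|g_j|\le 2$). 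Its expectation is therefore $O(1/\sqrt n)$ by Cauchy--Schwarz and the variance bound.
\item The first sum is an in-sample empirical $L_1$ error. By Cauchy--Schwarz it is bounded by the root empirical $L_2$ error $\bigl(\frac1n\sum_i(\hat\eta_j(S_i)-\eta_j(S_i))^2\bigr)^{1/2}$. Working with the empirical (fixed-design) norm avoids any uniform convergence from empirical to population norms.
\end{itemize}

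To bound the empirical $L_2$ error, we treat \eqref{eq:erm1}--\eqref{eq:erm2} as least squares with bounded noise. Note that $\E[Y^{(1)}\mid S]=\eta_1(S)$. Also $\E[P\mid S]=\E[f^*(X)^2\mid S]=\eta_2(S)$, because the two snapshot labels are conditionally independent given $X$ and the perturbation noise is independent of the labels. So $Y_i^{(1)}=\eta_1(S_i)+\xi_i$ and $P_i=\eta_2(S_i)+\xi_i'$, with noise that is conditionally mean zero and bounded in $[-1,1]$.

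Next we need a good element of $\Fl$. Corollary~\ref{cor:approx} supplies a tensor polynomial $q$ with $\sup|\eta_j-q|\le\varepsilon_l$. Clipping to $[0,1]$ does not increase this distance since $\eta_j\in[0,1]$. So $\mathrm{clip}(q)\in\Fl$ satisfies $\|\mathrm{clip}(q)-\eta_j\|_\infty\le\varepsilon_l=O(n^{-2})$ for our choice of $l$.

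The basic inequality for ERM then gives
\[
\|\hat\eta_j-\eta_j\|_n^2 \le 2\langle \xi, \hat\eta_j-\mathrm{clip}(q)\rangle_n + O(\varepsilon_l).
\]
The clip is 1-Lipschitz, so the unclipped predictor $\beta^\top\phi$ ranges over a $d$-dimensional linear space. I would handle the clipping by a localization argument via the covering number of $\Fl$ in $\|\cdot\|_n$. Clipped linear classes of dimension $d$ have pseudo-dimension $O(d)$, so $\log N(\epsilon)=O(d\log(1/\epsilon))$. A standard chaining or union-bound argument (e.g.\ Theorem 11.3 of Györfi et al.) then yields
\[
\E\|\hat\eta_j-\eta_j\|_n^2 = O\!\left(\frac{d\log n}{n}\right) + O(\varepsilon_l).
\]
Taking square roots with Jensen's inequality gives $O(\sqrt{d\log n/n})$. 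With $d=(l+1)^2=O(\log^2 n)$ this is $O(\log^{3/2}n/\sqrt n)$, which is the claimed rate.

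The constants depend only on $\theta$: through $l$, through $B_\theta$, and through $M=1$. They do not depend on the score distribution or on $\etao_j$. This holds because Lemma~\ref{thm:analytic} gives analyticity uniformly for any underlying predictor, with $|\eta_j|\le M$ on the strip controlled only by the kernel.

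I expect the main obstacle to be the ERM bound over the clipped class. The clip breaks linearity, so the clean projection argument giving $\sigma^2 d/n$ does not apply directly. The log factor must come from the covering or pseudo-dimension bound. A secondary subtlety is the 2D analyticity bound $M$. Lemma~\ref{thm:analytic} only asserts separate analyticity in each variable, so the tensor approximation should be done one variable at a time: approximate in $m$, then approximate each coefficient in $\sigma^2$. This at most doubles $\varepsilon_l$ (with a factor from coefficient sizes), which I expect to be absorbed since $\varepsilon_l$ is polynomially small.
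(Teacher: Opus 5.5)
\textbf{Verdict.} Your proof is correct, and it takes a genuinely different route from the paper's.

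\textbf{The paper's route.} The paper works through the population measure. Lemma~\ref{lem-aux-1} uses a Bernstein condition and a union bound over a sup-norm cover to get $\E\norm{\hat\eta_j-\eta_j}_{L^2(P)}^2\le 5\varepsilon_l^2+(406\,d\log(3n)+408)/n$. Step~4 converts this into a bound on $\abs{\mathrm{CE}_2^{\mathrm{plug}}-\CEpert}$. Because $\hat\eta_j$ is data-dependent, Step~5 then needs a second uniform-deviation argument, Hoeffding over a cover of $\mathcal{G}_l$, to reach the empirical $\CEhat$.

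\textbf{Your route.} You compare $\CEhat$ with the empirical error of the \emph{true} $\eta_j$ at the same sample points. This leaves two pieces. The first is a fixed-function sampling term of order $1/\sqrt n$, with no complexity factor. The second is a single in-sample least-squares bound. That bound follows from your basic inequality plus a union bound over a $1/n$-cover of the clipped class in $\norm{\cdot}_n$, conditional on the design, so no chaining is needed. The Gy\"orfi et al.\ theorem you cite concerns the population error of truncated linear least squares, so it is not quite the statement you need.

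\textbf{Where your route is more careful.} First, you cover the clipped class of Section~\ref{sec:estimation} through its pseudo-dimension. Lemma~\ref{lem-covering}, despite sharing the symbol $\Fl$, covers only polynomials with $\norm{p}_\infty\le 1$, and that set does not contain clipped polynomials with large coefficients. Second, your coordinate-wise tensor approximation uses only what Lemma~\ref{thm:analytic} proves: separate analyticity with the other variable real. The cost is a harmless factor $O(l)$ from summing $l+1$ coefficient errors, not \emph{at most doubling}. The paper's appeal to Hartogs would instead need the other variable to range over complex values.

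\textbf{A caveat you share with the paper.} The kernel only gives $\abs{\eta_j(t+ib)}\le 1/\cos(b/h)$. A bounded $M$ therefore requires a slightly narrower strip, which changes $\theta$ but not the rate.

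\textbf{What the paper's route buys.} It yields a \emph{population} bound on $\norm{\hat\eta_j-\eta_j}_{L^2(P)}$ with explicit constants. That is exactly what Corollary~\ref{cor:approx_recal} needs, since $\hat T$ is applied to fresh inputs. Your in-sample bound would need an extra empirical-to-population step to serve there.
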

Moreover, we can use essentially the same technique to obtain a novel result about first-order calibration; this is reflected in the remark below.
\begin{remark} [First-Order Calibration]
While proposition \ref{thm:main} does not immediately relate to first-order calibration\footnote{In first-order calibration, the conditioning is only wrt.\ the mean, ignoring the epistemic variance.}, estimating the first-order calibration error $\CEOne = \E[\abs{\E[Y \mid m] - m}] $ can be accomplished in a straightforward way using the same technique, yielding the same near-$1/\sqrt{n}$ rate.
\end{remark}

\paragraph{Rate comparison.} The table below compares the rates obtainable with various approaches. It can be seen that our method comes with the best rate. 
\begin{center}
\begin{tabular}{ll}
\textbf{Method} & \textbf{Rate for $\CE$ estimation} \\
\midrule
Bucketing \citet{ahdritz2025provable} & no certified rate \\
Bucketing + perturbation & $O(n^{-1/4})$ \\
Kernel regression + perturbation & $O(n^{-1/4})$ \\
\textbf{Polynomial + analyticity (this paper)} & $\boldsymbol{O(\log^{3/2} n / \sqrt{n})}$ \\
\end{tabular}
\end{center}

\section{Lower Bound on Rate and Minimax Optimality}
To complement our results about the rate, we also provide the following lower bound result (proven in appendix \ref{app:lowerboundproof}), which establishes that our rate is optimal (up to log factors). Propositions \ref{thm:main} and \ref{prop:lb}, taken together, provide a complete characterization of the sample complexity of second-order calibration. 

\begin{proposition}[Minimax lower-bound on rate]\label{prop:lb}
There exists an absolute constant $c > 0$ such that, for every sech bandwidth
$h>0$, every $n \geq 1$, and every estimator $\CEhat$, two distributions
$P_0, P_1$ in the setup of the main paper exist with
\[
  \max_{b\in\{0,1\}}\;
    \E_{P_b}\!\bigl[\,\bigl|\CEhat - \CEpert(P_b)\bigr|\,\bigr]
  \;\geq\; \frac{c}{\sqrt n}.
\]
The constant $c$ does not depend on $h$.
\end{proposition}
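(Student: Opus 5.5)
The plan is a standard two-point Le Cam argument. I will build two distributions that differ only in the label probability $f^*$, so that they are statistically close on $n$ 2-snapshots, while their perturbed second-order calibration errors differ by order $1/\sqrt n$. The lower bound then follows from the usual reduction: for any estimator,
\[
\max_{b}\E_{P_b}\bigl[|\CEhat-\CEpert(P_b)|\bigr]\ \ge\ \frac{\Delta}{4}\bigl(1-\TV(P_0^{\otimes n},P_1^{\otimes n})\bigr),
\]
where $\Delta = |\CEpert(P_0)-\CEpert(P_1)|$. So it suffices to get $\Delta \gtrsim 1/\sqrt n$ with the total variation bounded away from $1$, uniformly in $h$.

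\emph{Construction.} Take a degenerate predictor with a single constant original score, for instance $(m_{\text{orig}},\sigma^2_{\text{orig}})=(1/2,0)$, and let $X$ be arbitrary. Under $P_b$, set $f^*(x)\equiv p_b$ with $p_0=1/2$ and $p_1=1/2+\delta$, where $\delta=1/\sqrt n$ (capped so that $p_1\le 3/4$).

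Because the original score is constant and the perturbation noise is independent of $(X,Y)$, conditioning on the perturbed score $S=s$ carries no information about $f^*$. Hence $\eta_1(s)=p_b$ and $\eta_2(s)=p_b^2$ exactly, for every $s$ and every $h$. This is the step that removes all dependence on the sech kernel: the perturbed score $(\tilde m,\widetilde{\sigma^2})$ has the same law $Q_h$ under both hypotheses. We obtain
\[
\CEpert(P_b)=\E_{Q_h}\bigl[|p_b-\tilde m|\bigr]+\E_{Q_h}\bigl[|p_b^2-\tilde m^2-\widetilde{\sigma^2}|\bigr]=:G_h(p_b).
\]

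\emph{Separation.} This is the main obstacle: $G_h$ might be nearly flat at $1/2$. For instance, $\E|p-\tilde m|$ has derivative $2Q(\tilde m<p)-1$, which vanishes at $p=1/2$ by symmetry. To avoid this, I will shift the base point so the derivative is bounded below uniformly in $h$.

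I would first try $p_0$ close to $1$, say $p_0=3/4$ with $m_{\text{orig}}=0$ and $\sigma^2_{\text{orig}}=0$, so that $\tilde m$ is concentrated near $0$. Then $\tilde m<p$ with probability at least some absolute constant. Indeed, $\tilde m\in[0,1]$ has a monotone decreasing density on $[0,1]$ when centred at $0$, so $Q(\tilde m<3/4)\ge 3/4$ for all $h$. This makes the first term have derivative at least $1/2$ on $[3/4,3/4+\delta]$.

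The second term has derivative $2p\,(2Q(\cdot)-1)$ in absolute value at most $2p$, so it could cancel the first. To handle this, choose the perturbation direction so that the two terms move together. Any $p_1=p_0\pm\delta$ works if both absolute values increase, which holds when $p$ exceeds most of the mass of $\tilde m$ and of $\tilde m^2+\widetilde{\sigma^2}$. The first mass is at most $1$. For the second, the decreasing sech density of $\widetilde{\sigma^2}$ on $[0,1/4]$ centred at $0$ gives $Q(\tilde m^2+\widetilde{\sigma^2}<p^2)\ge$ an absolute constant exceeding $1/2$, after using independence and monotone densities. Then $G_h$ has derivative bounded below by an absolute constant $c_0$, so $\Delta\ge c_0\delta$.

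If the constants do not close, the fallback is to pick $p_0$ nearer $1$, e.g.\ $p_0=0.95$. Then both events have probability near $1$ for every $h$, since the sech densities centred at $0$ are decreasing and put at least uniform-level mass on small values.

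\emph{Indistinguishability.} The scores have identical laws, so only the labels differ. Each snapshot contributes $\KL(\Bern(p_1)^{\otimes2}\|\Bern(p_0)^{\otimes2})\le 2\delta^2/(p_0(1-p_0))=O(1/n)$. Pinsker then gives $\TV(P_0^{\otimes n},P_1^{\otimes n})\le 1/2$ after fixing the constant in $\delta$.

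Combining with Le Cam's bound yields $c/\sqrt n$, with $c$ independent of $h$ because every quantity used is bounded uniformly in $h$.
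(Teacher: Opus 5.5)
Your proposal is correct and follows essentially the paper's route: a Le Cam two-point argument with a constant boundary score (the paper uses $(1,0)$), constant $f^*\equiv p_b$ so that $\eta_1,\eta_2$ are constants and the perturbed-score law is shared, and monotonicity of the truncated sech density to get an $h$-uniform stochastic comparison with a uniform law. The only difference is how the $\eta_2$ term is handled. The paper takes $p_0=1/16$, so that $|\psi'(p)|\le 2p\le 1/4$ is simply dominated by the first term. Your choice of score $(0,0)$ with $p_0=3/4$ instead needs $\Pr(\tilde m^2+\widetilde{\sigma^2}<9/16)\ge 1/2$ uniformly in $h$. This does hold: comparing with independent $U[0,1]$ and $U[0,1/4]$ gives $\tfrac{8}{3}\bigl[(9/16)^{3/2}-(5/16)^{3/2}\bigr]\approx 0.66$, so your fallback is not needed.
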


\section{Second-Order Platt Scaling}
\label{sec:platt}

\subsection{The recalibration map}

Given calibration functions $\eta_1, \eta_2$ (true or estimated), define the \emph{recalibrated predictor}:
\begin{equation}
\label{eq:recal}
  T(x) = \bigl(\,\underbrace{\eta_1(S(x))}_{m'(x)},\;\; \underbrace{\eta_2(S(x)) - \eta_1(S(x))^2}_{(\sigma^2)'(x)}\,\bigr),
\end{equation}
where $S(x)$ is the perturbed score.

\begin{proposition}[Exact second-order Platt scaling]
\label{thm:recal}
If $\eta_1, \eta_2$ are the true calibration functions, then $T$ is perfectly second-order calibrated with respect to its own level sets:
\begin{align}
  \E[Y \mid T(X) = (m', (\sigma^2)')] &= m', \label{eq:recal_m}\\
  \E[f^*(X)^2 \mid T(X) = (m', (\sigma^2)')] &= (m')^2 + (\sigma^2)'. \label{eq:recal_v}
\end{align}
\end{proposition}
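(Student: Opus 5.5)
The plan is to use the tower property: $T(X)$ is a measurable function of the perturbed score $S(X)$. Write $T = g\circ S$ with $g(s) = (\eta_1(s), \eta_2(s) - \eta_1(s)^2)$. Then the $\sigma$-algebra generated by $T(X)$ is contained in the one generated by $S(X)$. For any event $A$ in $\sigma(T(X))$ we have $A = \{S(X)\in g^{-1}(B)\}$ for some Borel set $B$. The proof consists of checking that each conditional expectation against $T(X)$ reduces to the corresponding one against $S(X)$, followed by an application of $g$.

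For \eqref{eq:recal_m}, apply the tower property:
\[
\E[Y\mid T(X)] = \E\bigl[\E[Y\mid S(X)]\mid T(X)\bigr] = \E[\eta_1(S(X))\mid T(X)].
\]
The quantity $\eta_1(S(X))$ is exactly the first coordinate $m'(X)$ of $T(X)$, so it is $\sigma(T(X))$-measurable and can be pulled out. This gives $\E[Y\mid T(X)] = m'$.

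For \eqref{eq:recal_v}, the same argument gives
\[
\E[f^*(X)^2\mid T(X)] = \E[\eta_2(S(X))\mid T(X)].
\]
Now $\eta_2(S(X)) = (\sigma^2)'(X) + m'(X)^2$ is a function of the coordinates of $T(X)$, hence $\sigma(T(X))$-measurable. It therefore equals $(m')^2 + (\sigma^2)'$.

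The only step needing care is the precise meaning of the setup, which has two parts.

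First, the perturbed score $S$ involves extra perturbation randomness independent of $(X,Y,f^*(X))$. So $\eta_j$ must be read as conditional expectations over the joint law of $(X, \text{noise})$. The identity $\E[Y\mid S] = \E[f^*(X)\mid S]$ follows from $Y \perp S \mid X$: the noise is independent of the label given $X$, so $\E[Y\mid X, S] = f^*(X)$.

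Second, the calibration functions are only defined almost surely, as versions of conditional expectations. We fix Borel versions of $\eta_1,\eta_2$, which makes $g$ Borel measurable. Both identities are then understood almost surely, i.e.\ for $P_{T(X)}$-almost every value $(m',(\sigma^2)')$.

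I expect no real obstacle beyond stating these measurability and independence points cleanly. Note that the recalibrated variance could lie outside $[0, m'(1-m')]$ only on a null set, since $\eta_2 \ge \eta_1^2$ holds almost surely by Jensen. Hence $T$ is a valid higher-order predictor almost surely.
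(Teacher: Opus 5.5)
Your proof is correct and is the paper's argument: the tower property through $\sigma(T(X))\subseteq\sigma(S(X))$, followed by the observation that $\eta_1(S(X))=m'(X)$ and $\eta_2(S(X))=m'(X)^2+(\sigma^2)'(X)$ are $\sigma(T(X))$-measurable. Your measurability and almost-sure caveats usefully tighten the paper's ``constant on the conditioning event'' phrasing. One small gap in your closing aside: Jensen only gives $(\sigma^2)'\ge 0$, while the upper bound $(\sigma^2)'\le m'(1-m')$ needs $\eta_2\le\eta_1$, which follows from $f^*(X)^2\le f^*(X)$ together with your identity $\E[Y\mid S]=\E[f^*(X)\mid S]$.
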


\begin{proof}
\textbf{First moment.}  By the tower property:
\[
  \E[Y \mid T(X) = (m', (\sigma^2)')]
  = \E[\,\underbrace{\E[Y \mid S(X)]}_{= \eta_1(S(X)) = m'(X)} \mid T(X) = (m', (\sigma^2)')].
\]
On the event $\{T(X) = (m', (\sigma^2)')\}$, the first component $m'(X) = m'$ is constant.  So the expression equals $m'$.

\textbf{Second moment.}  Similarly:
\begin{align*}
  \E[f^*(X)^2 \mid T(X) = (m', (\sigma^2)')]
  &= \E[\eta_2(S(X)) \mid T(X) = (m', (\sigma^2)')].
\end{align*}
Now $\eta_2(S(X)) = \eta_1(S(X))^2 + (\eta_2(S(X)) - \eta_1(S(X))^2) = m'(X)^2 + (\sigma^2)'(X)$.  On the conditioning event, this equals $(m')^2 + (\sigma^2)'$.
\end{proof}

\subsection{Approximate guarantee from finite data}
We now want to show that the second-order Platt scaling ``works'' in the sense that the re-calibrated predictor is well-calibrated. In practice, in order to perform the re-calibration, we use the estimated $\hat\eta_1, \hat\eta_2$ from Section~\ref{sec:estimation}:
\begin{equation}
\label{eq:That}
  \hat T(x) = \bigl(\hat\eta_1(S(x)),\;\;\max(0,\;\hat\eta_2(S(x)) - \hat\eta_1(S(x))^2)\bigr).
\end{equation}
The clipping at zero handles estimation noise.  Persistent negativity at a score $s$ indicates the model overestimates its epistemic uncertainty at $s$.

\begin{corollary}
\label{cor:approx_recal}
Under the assumptions of Proposition~\ref{thm:main}, the recalibrated predictor $\hat T$ satisfies $\E[\CE(\hat T)] = O(\log^{3/2} n / \sqrt{n})$.
\end{corollary}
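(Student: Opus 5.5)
The plan is to reduce calibration of $\hat T$ to the $L^1$ (hence $L^2$) estimation error of $\hat\eta_1,\hat\eta_2$, using the exact identity of Proposition~\ref{thm:recal}. Write $\hat m'(x)=\hat\eta_1(S(x))$ and $\hat v'(x)=\hat m'(x)^2+\max(0,\hat\eta_2-\hat\eta_1^2)(S(x))$. The calibration functions of $\hat T$ are $\E[Y\mid \hat T]$ and $\E[f^*(X)^2\mid\hat T]$. Since $\hat T$ is a measurable function of $S$ (conditional on the training sample, taken independent of the evaluation point), the tower property gives
\[
\E[Y\mid \hat T]=\E[\eta_1(S)\mid \hat T],\qquad \E[f^*(X)^2\mid\hat T]=\E[\eta_2(S)\mid\hat T].
\]
Hence
\[
\bigl|\E[Y\mid\hat T]-\hat m'\bigr|=\bigl|\E[\eta_1(S)-\hat\eta_1(S)\mid\hat T]\bigr|\le \E[|\eta_1(S)-\hat\eta_1(S)|\mid \hat T].
\]
Taking expectations, the first term of $\CE(\hat T)$ is bounded by $\|\eta_1-\hat\eta_1\|_{L^1(P_S)}$.

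For the second term, conditioning again on $\hat T$ gives the bound $\E|\eta_2(S)-\hat v'(S)|$, which I split into two pieces. The first piece is $\E|\eta_2-\hat\eta_2|$ when the clip is inactive. The second piece is the clipping error: on the set where $\hat\eta_2<\hat\eta_1^2$ it equals $|\eta_2-\hat\eta_1^2|$. Since the true $\eta_2\ge\eta_1^2$ by Jensen, the clip only moves the estimate toward the feasible set, giving
\[
|\eta_2-\hat\eta_1^2|\le|\eta_2-\eta_1^2|+\dots\le|\eta_2-\hat\eta_2|+|\eta_1^2-\hat\eta_1^2|.
\]
More carefully: if $\hat\eta_2<\hat\eta_1^2$ and $\eta_2\ge\hat\eta_1^2$, then $|\eta_2-\hat\eta_1^2|\le|\eta_2-\hat\eta_2|$. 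Otherwise $\hat\eta_1^2>\eta_2\ge\eta_1^2$, so $|\eta_2-\hat\eta_1^2|\le\hat\eta_1^2-\eta_1^2\le 2|\hat\eta_1-\eta_1|$, using that values lie in $[0,1]$. Altogether,
\[
\CE(\hat T)\le 3\|\eta_1-\hat\eta_1\|_{L^1}+\|\eta_2-\hat\eta_2\|_{L^1}\le 3\|\eta_1-\hat\eta_1\|_{L^2}+\|\eta_2-\hat\eta_2\|_{L^2}.
\]

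It remains to invoke the risk bound for the ERM estimators \eqref{eq:erm1}--\eqref{eq:erm2} established inside the proof of Proposition~\ref{thm:main}. That bound combines the approximation error $\varepsilon_l=O(n^{-2})$ from Corollary~\ref{cor:approx} with a standard least-squares generalization bound over the clipped class $\Fl$, which has pseudo-dimension $O(d)=O(\log^2 n)$. It yields $\E\|\hat\eta_j-\eta_j\|_{L^2}=O(\sqrt{d\log n/n})=O(\log^{3/2}n/\sqrt n)$. Taking the expectation over the training sample and using Jensen to pass from $\E\|\cdot\|^2$ to $\E\|\cdot\|$ gives the claim.

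The main obstacle is the conditioning subtlety. $\CE(\hat T)$ must be evaluated on a fresh draw whose perturbed score $S$ is independent of the training data, so the tower step is legitimate with $\hat\eta_j$ held fixed. The $\hat T$ level sets may merge distinct $S$ values, but the conditional Jensen bound handles this without any regularity assumptions. If the main proof only controls in-sample error, I would either use a sample split or add the uniform deviation term, which is of the same order.
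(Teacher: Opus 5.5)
Your proposal is correct and follows the paper's argument: the tower property gives $\E[Y\mid\hat T]=\E[\eta_1(S)\mid\hat T]$, conditional Jensen reduces everything to $\|\eta_j-\hat\eta_j\|_{L^1(P)}$, and the population $L^2$ bound from the proof of Proposition~\ref{thm:main} finishes. Your case analysis for the clipping at zero uses $\eta_2\ge\eta_1^2$ to get $|\eta_2-\max(\hat\eta_1^2,\hat\eta_2)|\le|\eta_2-\hat\eta_2|+2|\eta_1-\hat\eta_1|$, which fills in the step the paper dismisses as ``analogous''; and since Lemma~\ref{lem-aux-1} already bounds the population (out-of-sample) excess risk, no sample split or extra uniform-deviation term is needed.
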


\begin{proof}[Proof sketch]
The first-moment calibration error of $\hat T$ is $\E[\abs{\E[Y \mid \hat T] - m'(X)}]$.  By the tower property, $\E[Y \mid \hat T] = \E[\eta_1(S) \mid \hat T]$.  Since $m'(X) = \hat\eta_1(S(X))$:
\[
  \E[\abs{\E[\eta_1 \mid \hat T] - \hat\eta_1}] \le \E[\abs{\eta_1(S) - \hat\eta_1(S)}] = \norm{\eta_1 - \hat\eta_1}_{L^1(P)} = O\!\left(\frac{\log^{3/2} n}{\sqrt{n}}\right),
\]
where the inequality is Jensen's (conditional expectation contracts $L^1$ distance).  The second-moment bound is analogous.
\end{proof}

\section{Experiments}
\label{sec:experiments}

Our four experiments mirror the chain of theoretical claims. Experiment~1 verifies the predicted
near-$1/\sqrt n$ rate of Proposition~\ref{thm:main}; Experiment~2 confirms that the resulting
recalibrator $\hat T$ delivers the calibrated $(\sigma^2)'$ promised by
Corollary~\ref{cor:approx_recal}; Experiment~3 isolates \emph{why} that calibration is worth
doing — its \emph{magnitude}, not just its ranking, controls downstream decisions; and
Experiment~4 reproduces the same effect on real labels. Experiments~1--3 use synthetic data with
known ground truth, Experiment~4 uses the public Weather Sentiment AMT dataset.

\paragraph{Experiment 1 — the predicted rate.}
The DGP is a 10-component Gaussian mixture in $\mathbb{R}^4$ with $f^*(x){=}\sigma(w^\top x)$;
the predictor is a 5-member MLP ensemble whose mean and unbiased variance are sech-perturbed
coordinate-wise. Ground-truth $\mathrm{CE}_2^{\mathrm{pert}}$ is computed by quasi-Monte Carlo
(replicate spread ${<}\,2{\cdot}10^{-5}$). We compare 2D bucketing, Nadaraya--Watson with an
isotropic Gaussian kernel, and our polynomial estimator (held-out and CV-tuned variants);
baseline bin counts and bandwidths follow textbook MISE rules with constants tuned on a held-out
split. Figure~\ref{fig:exp1} shows that the polynomial estimator dominates at every $n$, with
empirical log--log slope ${\approx}\,{-}0.70$ at both $h\!\in\!\{1/16,1/64\}$ — at least as steep as the
predicted $-1/2$ rate and steeper than the $-0.33$ to $-0.55$ slopes of bucketing and NW (consistent
with their distribution-free Lipschitz rate). The two polynomial variants are
indistinguishable, so the advantage is not an artefact of held-out tuning. Setup details in
App.~\ref{app:exp1}.

\begin{figure}[t]
\centering
\includegraphics[width=0.42\linewidth]{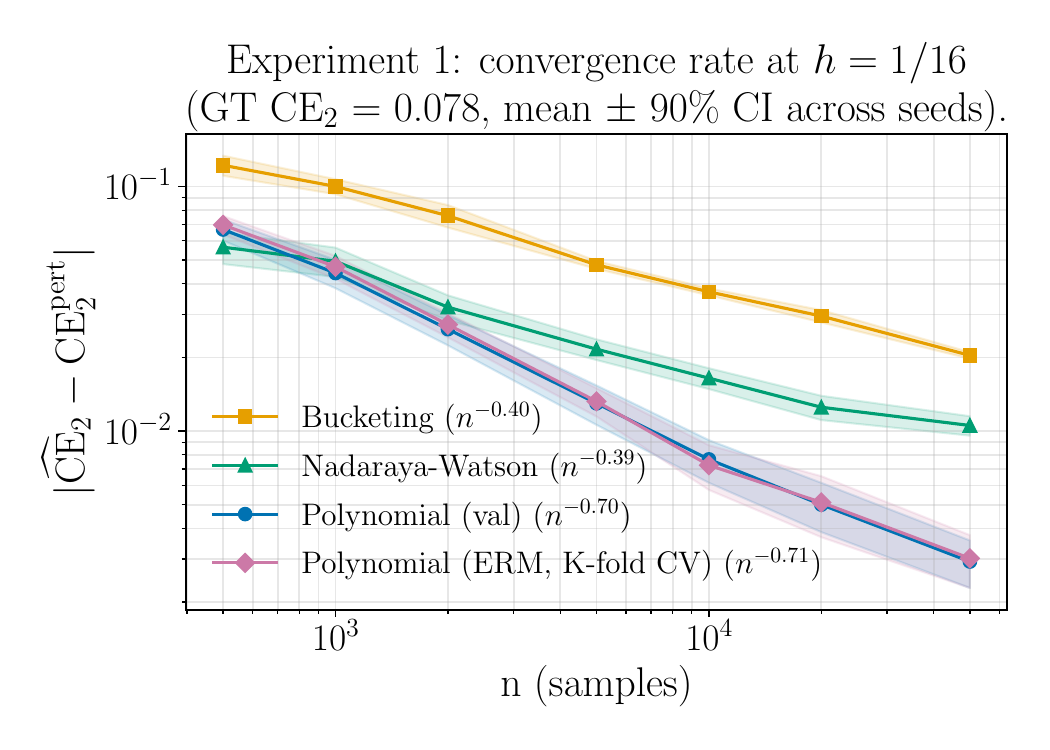}\hfill
\includegraphics[width=0.42\linewidth]{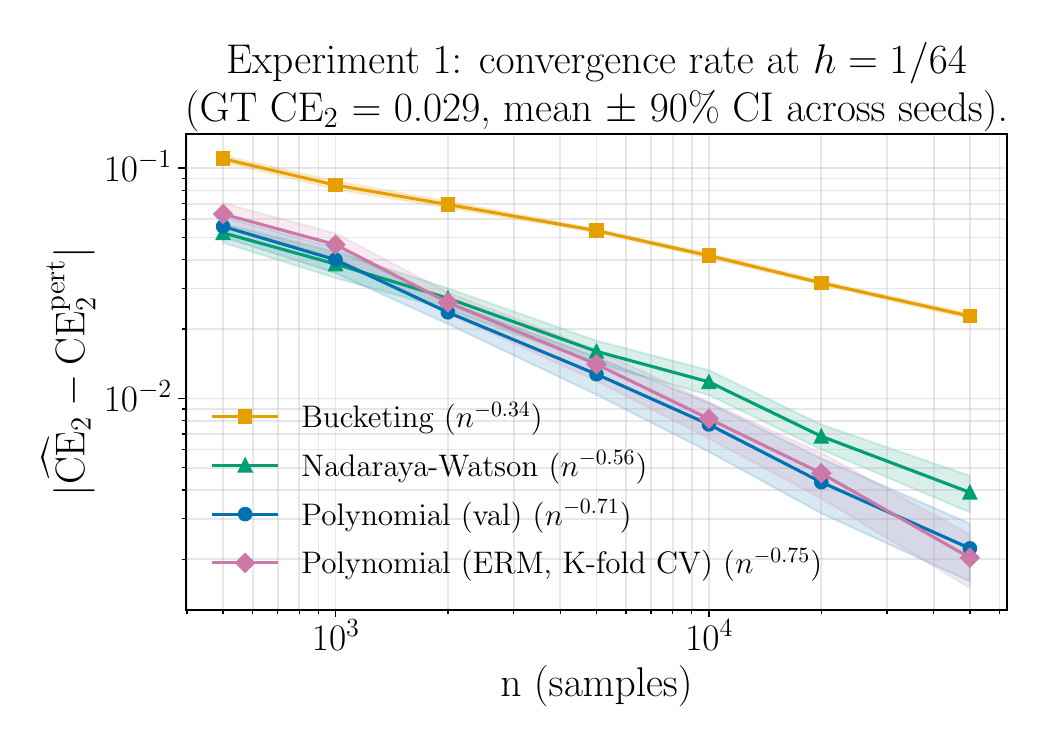}
\caption{\textbf{Experiment~1 (rate).} $|\widehat{\mathrm{CE}}_2-\mathrm{CE}_2^{\mathrm{pert}}|$
vs.\ $n$ at $h{=}1/16$ (left) and $h{=}1/64$ (right); mean $\pm$ Student-$t$ 90\% CI across 20
seeds, log--log slopes in legend.}
\label{fig:exp1}
\end{figure}

\paragraph{Experiment 2 — the recalibrated $\sigma^2$ tracks the truth.}
We replace the predictor with a 20-member ensemble deliberately undertrained so that its raw
$\sigma^2$ carries only a faint epistemic signal; the recalibrator is a tensor-Chebyshev ridge
regression of degree~12 fit on $2{\cdot}10^4$ 2-snapshots. Ground-truth conditional variance at
perturbed scores is obtained by sech-smoothing the empirical $(p,p^2)$ joint of the DGP. The raw
$\sigma^2$ is essentially uncorrelated with truth (Pearson $0.10$), but after applying $\hat T$
the scatter concentrates on the diagonal (Pearson $0.74$); see Figure~\ref{fig:exp2}. Across 20
calibration seeds, $\mathrm{CE}_2$ drops from $0.37$ to $0.04\pm0.01$ — exactly the rate-driven
recalibration guarantee of Corollary~\ref{cor:approx_recal} cashing out empirically. App.~\ref{app:exp2}.

\begin{figure}[t]
\centering
\begin{subfigure}{0.42\linewidth}\includegraphics[width=\linewidth]{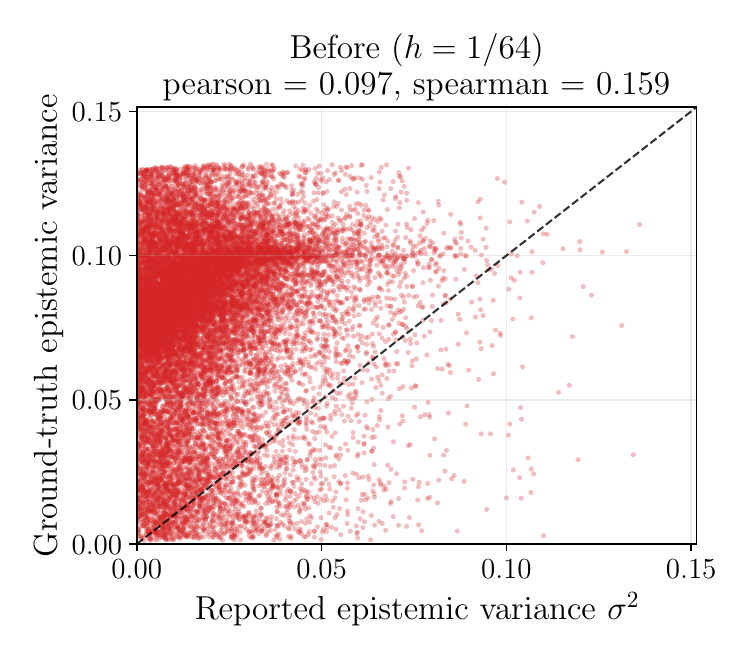}\end{subfigure}\hfill
\begin{subfigure}{0.42\linewidth}\includegraphics[width=\linewidth]{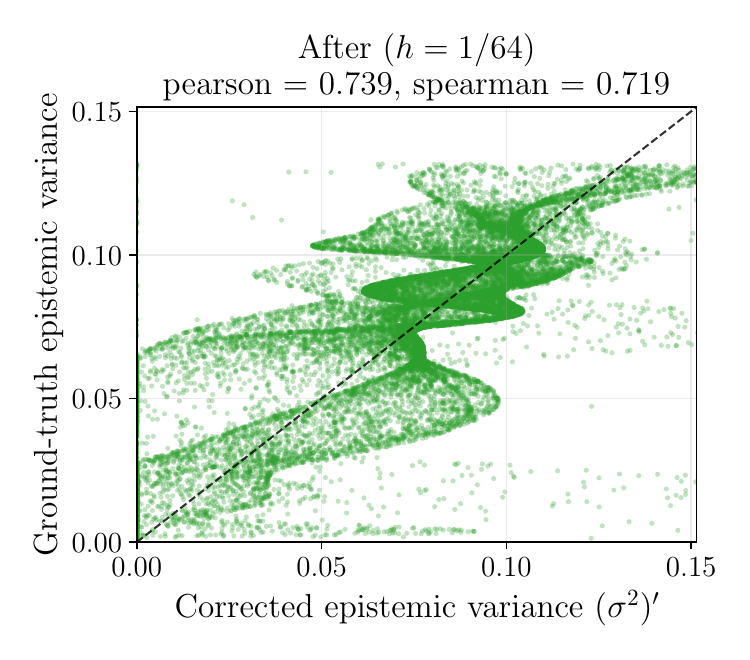}\end{subfigure}
\caption{\textbf{Experiment~2 (recalibration).} Reported $\sigma^2$ (left) and corrected
$(\sigma^2)'$ (right) versus the ground-truth conditional variance on a held-out split, at
$h{=}1/64$. Dashed line is $y{=}x$.}
\label{fig:exp2}
\end{figure}

\paragraph{Experiment 3 — magnitudes, not just rankings.}
A clinic refers borderline patients ($m\!\in\![0.35,0.65]$) at cost $c{=}0.06$ for a noisy
specialist panel; the per-patient gain $g(\theta){=}2\theta^2-2\theta+0.5-c$ has conditional
mean $2\eta_2(S)-2\eta_1(S)+0.5-c$, so any post-hoc procedure that calibrates only $\eta_1$ is
misspecified for the Bayes-optimal score. The borderline cohort splits into an aleatoric
subpopulation ($\theta{\approx}0.5$, small $\sigma^2$) and a hidden-subtype one
($\theta\!\in\!\{0.12,0.88\}$, large $\sigma^2$), both centred at $m{\approx}0.5$.
Figure~\ref{fig:exp3} reports realised gain versus referral threshold $\tau$. 2-D second-order
Platt tracks the oracle across the full $\tau\!\geq\!0$ range. The raw $(m,\sigma^2)$ plug-in
matches at $\tau{=}0$ but \emph{collapses} near $\tau{\approx}0.06$: it ranks patients correctly
but its score \emph{magnitudes} are biased low on the epistemic group, so any cost-aware
threshold drops the entire group. First-order methods cannot separate the subpopulations and refer
essentially no one. The lesson is exactly what Proposition~\ref{thm:recal} promises and ranking-only
metrics would hide: a calibrated $\sigma^2$ matters numerically. App.~\ref{app:exp3}.

\begin{figure}[t]
\centering
\begin{minipage}[c]{0.46\linewidth}\centering\includegraphics[width=\linewidth]{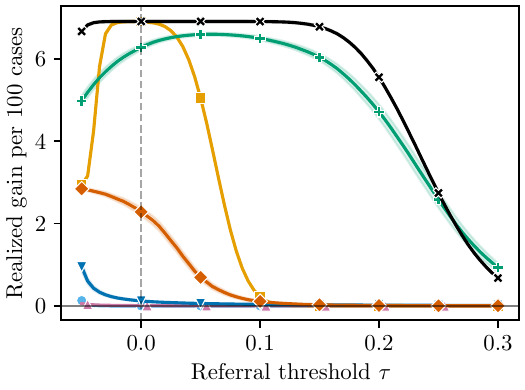}\end{minipage}\hfill
\begin{minipage}[c]{0.50\linewidth}\centering\includegraphics[width=\linewidth]{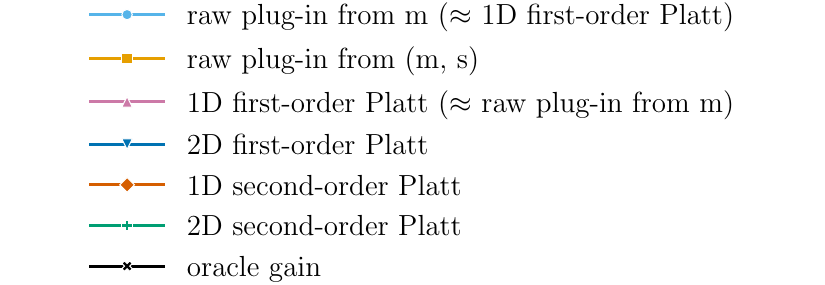}\end{minipage}
\caption{\textbf{Experiment~3 (decision utility).} Realised gain per 100 borderline patients
vs.\ referral threshold $\tau$ (mean $\pm 1.96\,$SEM, 200 repeats). 2-D second-order Platt tracks
the oracle; the raw $(m,\sigma^2)$ plug-in collapses near $\tau{\approx}0.06$.}
\label{fig:exp3}
\end{figure}

\paragraph{Experiment 4 — the same effect on real labels.}
We move to the public Weather Sentiment AMT dataset (300 tasks, 20 worker votes each,
one-vs-rest binarisation), with an audit task whose downstream value provably depends on
$\eta_2$ and not just $\eta_1$: an auditor must rank items by the probability that two fresh
workers disagree, $g(\theta){=}2\theta(1-\theta)$, whose calibrated form $2\eta_1(S)-2\eta_2(S)$
requires $\eta_2$. Items are mixed in equal proportion from an aleatoric cohort
($\theta\!\approx\!1/2$) and a hidden-subtype cohort ($\theta$ near $0$ or $1$); the score
$S=(m,s)\in[0,1]^2$ is constructed in an XOR pattern so that the 1-D marginals of $m$ and of $s$
are \emph{identical} between cohorts, leaving any 1-D calibrator information-theoretically blind.
At a $5\%$ audit budget, 2-D second-order Platt finds $46.4$ disagreements per 100 audited items,
versus $30.9$ for 2-D first-order Platt, ${\sim}26$ for all four 1-D and raw baselines, and
$49.8$ for the oracle; the paired AUC lift of 2-D second-order over each non-oracle baseline has
a $95\%$ CI strictly above zero (per-baseline win fractions $\geq 0.97$). On real labels, in a
task whose value provably depends on $\eta_2$, second-order Platt is the only practical method
that approaches the oracle. Full audit-yield curves and per-method statistics in
App.~\ref{app:exp4} (Fig.~\ref{fig:exp4}).

\section{Related Work}
\label{sec:related}

\paragraph{Higher-order calibration.}
\citet{ahdritz2025provable} introduced higher-order calibration via
$k$-snapshots and a Wasserstein criterion defined over a partition of the
score space, but provided no finite-sample estimation bounds. We dispense with
the partition (Appendix~\ref{app:reconciliation} relates the two
formulations quantitatively), give the first finite-sample bound for
$\mathrm{CE}_2$, and provide the first post-hoc method with a second-order
guarantee. \citet{ahdritz2025provable} also propose a bucketed analogue of
Platt scaling, without data-efficiency bounds.

\paragraph{First-order calibration measurement.}
ECE is conventionally estimated by binning
\citep{naeini2015obtaining,guo2017calibration}, but binned estimators are
biased and inconsistent
\citep{vaicenavicius2019evaluating,kumar2019verified}. \citet{kumar2019verified}
give a debiased estimator with finite-sample bounds, \citet{roelofs2022mitigating}
empirically compare variants, and \citet{widmann2019calibration} obtain
consistent multi-class estimators via matrix-valued kernels.
\citet{blasiok2023} unify this literature through the distance to calibration,
identifying which measures are polynomially related to it. Closest in spirit
to our work is the SmoothECE of \citet{blasiok2023smooth}, a first-order
calibration measure built from Nadaraya--Watson regression with a
reflected-Gaussian kernel. We differ in two essential respects: we target the
two-dimensional second-order error $\mathrm{CE}_2$, and we use the sech kernel
of \citet{ciosek2026measuring}, whose poles in the complex plane render the
perturbed calibration functions \emph{analytic} in a strip rather than merely
smooth. Bernstein-ellipse approximation then yields exponential (rather than
polynomial) polynomial-approximation rates, which is what lets us achieve
$\tilde O(1/\sqrt n)$ in two dimensions where a Nadaraya--Watson analogue
would give $n^{-1/4}$. \citet{ciosek2026measuring} themselves establish only
an $n^{-1/3}$ rate for the first-order error and do not exploit analyticity.
\citet{lee2023t} study the complementary problem of \emph{testing} first-order
calibration and prove minimax optimality of their T-Cal procedure under a
smoothness assumption.

\paragraph{Distribution-free impossibility.}
\citet{gupta2020distribution} prove that no scoring function whose level sets induce an uncountable partition of the input space admits a distribution-free asymptotic first-order calibration guarantee, ruling out continuous parametric recalibrators like Platt scaling and temperature scaling and leaving binning as the only route to a finite-sample bound. Our perturbation circumvents this impossibility as a inference-time design
choice rather than an assumption on the data.

\paragraph{Post-hoc calibration.}
Platt scaling \citep{platt1999probabilistic}, isotonic regression
\citep{zadrozny2002transforming}, beta calibration \citep{kull2017beta}, and
temperature scaling \citep{guo2017calibration} are the standard first-order
recalibration methods: each fits a low-dimensional map from scores to
corrected probabilities and inherits a calibration guarantee by construction
(up to estimation error). Our second-order Platt scaling is, to our knowledge,
the first post-hoc procedure that recalibrates both a mean prediction and an
epistemic-variance estimate with provable finite-sample bounds on the
calibration error of the recalibrated classifier.

\paragraph{Multicalibration.}
A complementary line of work strengthens first-order calibration along the
subgroup axis rather than the moment axis. \citet{hebert2018multicalibration}
require the mean prediction to be calibrated simultaneously on every
subpopulation in a rich, computationally-identifiable class. Our work is
orthogonal: we calibrate the second moment on the predictor's own level sets
rather than the first moment across many subgroups. Combining the two ie.\ developing multicalibrated higher-order predictors is a natural direction left to
future work.

\paragraph{Additional Related Work} We provide additional pointers to related work in appendix \ref{sec-more-related-work}.

\section{Conclusion}

We have shown that the $\sech$ perturbation kernel, previously used to enforce smoothness of calibration functions, in fact makes them analytic.  Exploiting this via polynomial regression yields near-parametric $\tilde O(1/\sqrt{n})$ estimation of both calibration functions $\eta_1$ and $\eta_2$, and hence of the second-order calibration error $\CE$. Remapping the predictor through the estimated calibration functions produces a second-order calibrated predictor whose epistemic uncertainty has provable real-world semantics. We briefly explore remaining open questions in Appendix \ref{app:openquestions}.

\bibliographystyle{plainnat}
\bibliography{refs}


\appendix

\section{Further Related Work}
\label{sec-more-related-work}
\paragraph{Second-order uncertainty measures.}
A complementary line of work asks \emph{what} second-order quantity ought to
be measured, rather than how to calibrate it. \citet{wimmer2023quantifying}
examine entropy-based decompositions of aleatoric and epistemic uncertainty,
\citet{sale2023second} propose Wasserstein-based alternatives, and
\citet{hofman2024quantifyingaleatoricepistemicuncertainty} develop
scoring-rule decompositions. These works concern what to measure; ours
concerns how to calibrate it. The general aleatoric/epistemic distinction we
exploit is reviewed in \citet{hullermeier2021aleatoric} and
\citet{Hllermeier2019AleatoricAE}.

\paragraph{Bayesian and evidential uncertainty decomposition.}
Several lines of work produce mean-and-uncertainty outputs without
calibration guarantees: Bayesian active learning by disagreement
\citep{houlsby2011bayesian}, variance decompositions for Bayesian neural
networks \citep{depeweg2018decomposition,kendall2017uncertainties}, and deep
ensembles \citep{lakshminarayanan2017simple}. A complementary family bypasses
ensembling by directly parameterising a higher-order output distribution:
prior networks \citep{malinin2018predictive} and evidential deep learning
\citep{sensoy2018evidential} both place a Dirichlet on the simplex and read
epistemic uncertainty from its concentration. Second-order Platt scaling is
agnostic to the source of $(m, \sigma^2)$: it can recalibrate the output of
any of these methods post hoc.

\paragraph{Distribution-free uncertainty quantification.}
Our analysis is distribution-free in the sense that no assumption is placed
on $f^*$ or on the marginal of $X$; the only structural input is the sech
perturbation, which is a deployment-side design choice rather than an
assumption on the data. A different distribution-free paradigm is conformal
prediction \citep{vovk2005algorithmic,angelopoulos2023conformal}, which
produces per-sample prediction sets with a marginal coverage guarantee on the
label. Conformal set size reflects total predictive uncertainty and does not,
on its own, separate aleatoric from epistemic components; recent work that
attempts such a split \citep{sale2025aleatoric} does so by importing an
external decomposition and using conformal machinery to calibrate it.
Second-order Platt scaling is complementary: it targets the moments of the
conditional label distribution on the predictor's level sets, yielding a
calibrated epistemic-uncertainty scalar rather than a coverage-valid set.

\section{Details of Experiments}
\subsection{Experiment 1: Additional Details}
\label{app:exp1}

\paragraph{DGP and predictor.} $X\in\mathbb{R}^4$ is drawn from a 10-component
isotropic Gaussian mixture (centres $\sim\mathcal{N}(0,1.5^2 I)$, unit-variance
components); $f^*(x)=\sigma(w^\top x)$ with $w\sim\mathcal{N}(0, I/2)$. The
predictor is a 5-member MLP ensemble ($4\!\to\!64\!\to\!64\!\to\!1$, ReLU,
BCE, Adam at $10^{-3}$, 30 epochs, batch 256) trained once on a fresh sample
of $5{,}000$; the score is its mean and unbiased variance, clipped to the
feasible region. Independent truncated-sech perturbations on $[0,1]$ and
$[0,1/4]$ are sampled in closed form via
$G^{-1}(u;s,h)=s+2h\,\mathrm{arctanh}(\tan(u/(2h)))$.

\paragraph{Ground truth.} For each $h$, $\mathrm{CE}_2^{\mathrm{pert}}$ is
computed by depositing a scrambled-Sobol QMC sample of size $2^{18}$ on a
$1025\times 257$ tensor grid, applying the truncated-sech kernel matrices
exactly, and integrating the pointwise loss against the perturbed-score
density (trapezoidal rule). Two independent Sobol replicates agree to
$\le 2\!\cdot\!10^{-5}$, much smaller than any estimator error.

\paragraph{Baselines.} All estimators target $\eta_1$ by regressing $Y^{(1)}$
on the perturbed score and $\eta_2$ by regressing $Y^{(1)}Y^{(2)}$.
\emph{Bucketing} uses $K\times K$ axis-aligned cells with
$K=\mathrm{clip}(\lceil c\,(n/h^2)^{1/4}\rceil, 2, 200)$, the standard
2D-histogram MSE balance for an $O(1/h)$-Lipschitz target. \emph{Nadaraya--Watson}
uses an isotropic Gaussian kernel (truncated at $5b$) of bandwidth
$b = c\,h^{1/2}n^{-1/4}$ on the $\sigma^2$-rescaled rectangle (multiplying
$\sigma^2$ by 4 so both axes have unit support). The constant $c$ is selected
from $\{0.25, 0.5, 1, 2, 4, 8\}$ on a held-out split by squared-error loss;
bucketing dyadically extends the grid if the optimum lies on the boundary.

\paragraph{Polynomial estimator.} Tensor Chebyshev features of degree $l$
per axis, fit by ridge with $\lambda$ scaled by $\mathrm{tr}(G)/d$, switching
to dual form when $(l+1)^2>n$. The candidate degree schedule starts at
$l_{\max}=\min(l_{\text{paper}}, L_{\text{cap}})$ with
$l_{\text{paper}}(n,h)=\lceil 2\log n / \log\theta(h)\rceil$ (the analytic
choice from Proposition~\ref{thm:main}) and per-$h$ caps
$L_{\text{cap}}\in\{44, 88\}$ for $h\in\{1/16,1/64\}$, then halves down to a
floor of $4$. The ridge multiplier is selected from $\{10^{-6},\dots,1\}$.
\emph{val}: $(l,\alpha)$ chosen on a held-out hyper split. \emph{ERM, CV}:
5-fold CV on the validation split, with closed-form leave-one-out
predictions reported on the same split. Per-coordinate predictions clipped
to $[0,1]$.

\paragraph{Sampling and aggregation.} For each of 20 seeds we draw three
i.i.d.\ splits (train, hyper, valid) of size $50{,}000$; perturbations are
redrawn per $h$, raw scores and labels are shared across $h$. For each
$n\in\{500, 10^3, 2\!\cdot\!10^3, 5\!\cdot\!10^3, 10^4, 2\!\cdot\!10^4,
5\!\cdot\!10^4\}$ the estimators use only the first $n$ samples of each
split (so per-seed $n$-curves are coupled). Bands are Student-$t$ 90\%
intervals across seeds; reported slopes are least-squares fits in
$\log_{10}(n)$ vs.\ $\log_{10}(\text{mean error})$.

\subsection{Experiment 2: implementation details}
\label{app:exp2}

\paragraph{Ensemble.} 20 MLPs (\texttt{4-64-64-1}, ReLU), each trained for
$60$ epochs of BCE with Adam (lr~$10^{-3}$, batch~$256$) on the same $24$
$(x,y)$ pairs from the DGP, varying only the initialisation seed. The small
per-member training set keeps the disagreement variance informative. Outputs
are clipped to the feasible set $\sigma^2\le m(1{-}m)$.

\paragraph{Ground-truth surfaces.} For each $h$, we draw $2^{18}$ scrambled
Sobol points from the DGP and bilinearly deposit $(1,p,p^2)$ on a
$1025\times 257$ tensor grid over $[0,1]\times[0,1/4]$. Convolving each axis
with the 1D sech kernel at bandwidth $h$ yields gridded $\eta_1,\eta_2$ and a
perturbed score density; the ground-truth conditional variance and
$\mathrm{CE}_2$ at $h$ are read off these surfaces, with bilinear interpolation
at arbitrary perturbed scores.

\paragraph{Recalibrator.} $(M,\Sigma)$ is affinely mapped to $[-1,1]^2$ and
lifted to a tensor-Chebyshev basis of degree~$12$ ($169$ features). The two
regressions for $\hat\eta_1,\hat\eta_2$ share this basis and are solved by
Cholesky on the $169\times 169$ Gram matrix, with only a numerical ridge floor
of $10^{-12}\cdot\mathrm{tr}(G)/169$ (no cross-validation, no degree search).
Predictions are projected onto the feasible moment region
$\hat\eta_1\in[0,1]$, $\hat\eta_2\in[\hat\eta_1^2,\hat\eta_1]$ before forming
$(\sigma^2)'$.

\paragraph{Variance across seeds.} The $\mathrm{CE}_2$ intervals reported in
the main text are mean $\pm$ standard deviation over $20$ independent draws of
the $n{=}2{\cdot}10^4$ calibration set.

\subsection{Experiment 3 details}
\label{app:exp3}

\paragraph{DGP.} Four-group mixture with weights $(0.25, 0.25, 0.35, 0.15)$.
Groups $0$/$1$ are clear negatives/positives concentrated at
$\theta \approx 0.08$ and $\theta \approx 0.92$ (small $\sigma^2$); they
contribute negligible mass after the borderline filter and exist only so the
unfiltered population has full support over $m$. The two groups that populate
the borderline cohort:
\begin{itemize}\setlength{\itemsep}{0pt}
  \item \emph{Group 2 (aleatoric, weight $0.35$):}
    $m \sim \mathcal{N}(0.5,\, 0.035^2)$,\;
    $\sigma^2 \sim \mathcal{N}(0.010,\, 0.002^2)$,\;
    $\theta \sim \mathcal{N}(0.5,\, 0.040^2)$.
  \item \emph{Group 3 (hidden subtype, weight $0.15$):}
    $m \sim \mathcal{N}(0.5,\, 0.035^2)$,\;
    $\sigma^2 \sim \mathcal{N}(0.060,\, 0.010^2)$;\;
    subtype $b \sim \mathrm{Unif}\{0,1\}$, then
    $\theta \sim \mathcal{N}(0.12,\, 0.030^2)$ if $b=0$, else
    $\mathcal{N}(0.88,\, 0.030^2)$.
\end{itemize}
All draws are independently clipped so that $m,\theta \in [0,1]$ and
$\sigma^2 \in [0, 1/4]$. Within the borderline cohort, $\eta_1 \approx 0.5$
for both groups while $\eta_2 \approx 0.25$ versus $\eta_2 \approx 0.5$, so
the two cannot be separated from $m$ alone.

\paragraph{Calibrators.} Tensor-product Chebyshev features
($T_0,\dots,T_d$ on $2m-1 \in [-1,1]$ and on $8\sigma^2-1 \in [-1,1]$) with
$d_{\mathrm{1D}}=6$ and per-axis $d_{\mathrm{2D}}=4$ ($25$ features), fit by
ridge ERM with ridge $10^{-4}$. Calibration labels are 2-snapshots
$Y^{(1)}, Y^{(2)} \stackrel{\mathrm{iid}}{\sim} \mathrm{Bernoulli}(\theta)$;
first-order calibrators regress $Y^{(1)}$ and set
$\hat\eta_2 := \hat\eta_1^2$, second-order calibrators jointly regress
$(Y^{(1)},\,Y^{(1)}Y^{(2)})$. Predicted moments are projected onto
$\{\hat\eta_1 \in [0,1],\ \hat\eta_1^2 \le \hat\eta_2 \le \hat\eta_1\}$
(Jensen plus the Bernoulli upper bound $\theta^2 \le \theta$). Each method's
patient score is $2\hat\eta_2 - 2\hat\eta_1 + 0.5 - c$.

\paragraph{Methods.} Seven curves in Figure~\ref{fig:exp3}: (i) raw plug-in
from $m$ ($\hat\eta_1\!:=\!m$, $\hat\eta_2\!:=\!m^2$); (ii) raw plug-in from
$(m,\sigma^2)$ ($\hat\eta_1\!:=\!m$, $\hat\eta_2\!:=\!m^2+\sigma^2$, then
project); (iii)/(iv) 1D/2D first-order Platt; (v)/(vi) 1D/2D second-order
Platt; (vii) oracle, scoring with the true $\theta$.

\paragraph{Replication.} $200$ independent repeats with seeds
$0,\dots,199$; pre-filter sizes $N_{\mathrm{cal}}=3000$,
$N_{\mathrm{eval}}=8000$ (about half pass the borderline filter under the
group weights); threshold grid of $71$ points in $[-0.05, 0.30]$. Bands in
Fig.~\ref{fig:exp3} are $\pm 1.96\,$SEM across repeats. We did not tune the
polynomial degrees, ridge, or filter range; the values above were our first
choice. Source code is included with the submission.

\subsection{Experiment 4: details}\label{app:exp4}

\paragraph{Data.} Weather Sentiment AMT: per-(worker, task) categorical
responses with majority-vote gold. We binarise as one-vs-rest on the gold
class; the figure uses class~1. Tasks with fewer than $5$ votes are dropped,
leaving $300$ tasks with $20$ votes each.

\paragraph{Cohort, provisional label, and $\theta$.} Let $\hat p_i$ be the
empirical positive rate of the binary votes for item $i$. The aleatoric cohort
is $\{i:|\hat p_i-\tfrac12|\le 0.22\}$, the hidden-subtype cohort is
$\{i:\hat p_i\le 0.12 \text{ or } \hat p_i\ge 0.88\}$, and the two are balanced
to equal size. Each item is assigned a provisional binary label
$\mathrm{prov}_i$: aleatoric items use $\mathrm{Bernoulli}(\tfrac12)$; hidden
items use the empirical majority for half (chosen uniformly) and its complement
for the other half. The agreement probability of a fresh worker is then
$\theta_i=\hat p_i$ if $\mathrm{prov}_i=1$, else $1-\hat p_i$. Aleatoric items
therefore have $\theta_i\!\approx\!\tfrac12$, hidden items $\theta_i$ near $0$
or $1$; both cohorts have $\mathbb{E}\theta_i\!\approx\!\tfrac12$ but very
different $\mathbb{E}\theta_i^2$.

\paragraph{Score construction (XOR mode).} Let
$\mathrm{lo}=\tfrac12-s_{\mathrm{gap}}/2$,
$\mathrm{hi}=\tfrac12+s_{\mathrm{gap}}/2$ with $s_{\mathrm{gap}}=0.35$. Draw
$b_i\sim\mathrm{Bernoulli}(\tfrac12)$ and set $m_i=\mathrm{hi}$ if $b_i=1$ else
$\mathrm{lo}$; set $s_i=\mathrm{hi}$ if
$\mathbb{1}[\text{kind}_i=\text{aleatoric}]=b_i$, else $\mathrm{lo}$. Add
independent Gaussian noise of std $\sigma_m=0.06$, $\sigma_s=0.08$ and clip to
$[0,1]$. By construction the marginal of $m$ and the marginal of $s$ are the
same Gaussian-noisy two-mode mixture in both cohorts, while the joint pattern
is a cohort-aligned XOR.

\paragraph{Calibrators.} For each calibration item, $\widehat\eta_1$ is fit to
$a_i/n_i$ and $\widehat\eta_2$ to $a_i(a_i-1)/(n_i(n_i-1))$, the unbiased
all-pairs estimators of $\theta_i$ and $\theta_i^2$. Feature maps: 1-D is
$[1,m,\dots,m^4]$; 2-D is the total-degree-$3$ tensor basis in $(m,s)$, $10$
features. We use ridge regression with $\lambda=10^{-3}$. Predictions are
clipped to $\widehat\eta_1\in[0,1]$ and projected to the admissible region
$\widehat\eta_1^{\,2}\le\widehat\eta_2\le\widehat\eta_1$.

\paragraph{Methods.} Eight ranking scores: raw $2m(1-m)$; raw $s$; raw $1-s$;
1-D first-order $2\widehat\eta_1^{1\mathrm{D}}(1-\widehat\eta_1^{1\mathrm{D}})$;
1-D second-order $2\widehat\eta_1^{1\mathrm{D}}-2\widehat\eta_2^{1\mathrm{D}}$;
2-D first-order $2\widehat\eta_1^{2\mathrm{D}}(1-\widehat\eta_1^{2\mathrm{D}})$;
2-D second-order $2\widehat\eta_1^{2\mathrm{D}}-2\widehat\eta_2^{2\mathrm{D}}$
(primary); oracle $g(\theta_i)=2\theta_i(1-\theta_i)$, shown for upper-bound
reference only.

\paragraph{Splits and reporting.} Per repeat, items are partitioned disjointly
into selection ($25\%$), calibration ($35\%$), evaluation (rest). Cohort
thresholds, $s_{\mathrm{gap}}$, noise scales, score-mode, and target class are
chosen on the selection split by a held-out audit-AUC lift criterion; the
final figure uses $1000$ fresh draws of the calibration/evaluation split at
the chosen configuration $(\text{extreme\_thr},\text{ale\_width},\sigma_m,
\sigma_s,s_{\mathrm{gap}})=(0.12,0.22,0.06,0.08,0.35)$. Yield at audit
fraction $f$ is $100\cdot\overline{g(\theta_i)}$ over the top-$\lceil f
n_{\mathrm{ev}}\rceil$ evaluation items by score; bands are $\pm 1.96\,
\mathrm{SEM}$ across repeats. Source code reproducing all numbers and figures
is included with the submission.

\begin{figure}[h]
  \centering
  \includegraphics[width=0.75\linewidth]{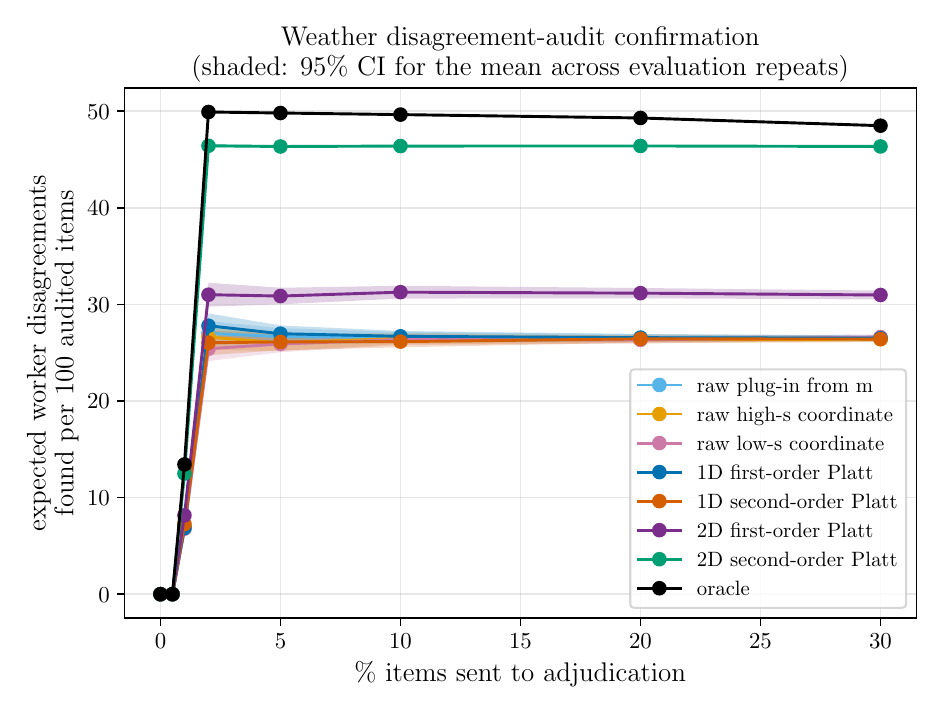}
  \caption{\textbf{Experiment~4: full audit-yield curves.} Yield (expected fresh-worker
  disagreements found per 100 audited items) vs.\ audit budget on the Weather Sentiment AMT
  cohort, over $1000$ evaluation repeats. 2-D second-order Platt is the only non-oracle method
  that exploits the joint $(m,s)$ structure required by the decision problem; all 1-D and raw
  baselines collapse onto a common curve because the 1-D marginals are identical between the
  aleatoric and hidden-subtype cohorts by construction. Bands are $\pm 1.96\,$SEM.}
  \label{fig:exp4}
\end{figure}

\paragraph{Per-method audit-yield summary.} At a $5\%$ audit budget the seven non-oracle methods
are well separated into three tiers: 2-D second-order Platt at $46.4$ disagreements per 100
audited items; 2-D first-order Platt at $30.9$; and the four 1-D and raw baselines (raw $m$, raw
$s$, raw $1-s$, 1-D first-order, 1-D second-order) clustered around ${\sim}26$. The oracle
$g(\theta_i)=2\theta_i(1-\theta_i)$ achieves $49.8$. The paired AUC lift of 2-D second-order
Platt over each of the six non-oracle baselines has a $95\%$ CI strictly above zero, with
per-baseline win fractions $\geq 0.97$ across the $1000$ evaluation repeats — i.e., the
2-D-second-order ordering audit-AUC dominates each baseline's ordering on essentially every
draw, not just on average. The collapse of all 1-D and raw curves onto a common envelope is
expected by construction: the XOR score design makes the cohort indicator
information-theoretically unrecoverable from any single coordinate, so no 1-D calibrator can do
better than chance at distinguishing aleatoric from hidden-subtype items.

\section{Proofs}
\subsection{Auxiliary lemma}
\begin{lemma}[Positive real part]
\label{lem:posreal}
For any $a \in \R$ and $|b| < \pi/2$:
\[
  \mathrm{Re}\bigl(\sech(a + ib)\bigr) = \frac{\cosh(a)\cos(b)}{|\cosh(a+ib)|^2} > 0.
\]
\end{lemma}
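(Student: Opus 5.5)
The identity follows from writing $\sech$ as a reciprocal and rationalising the denominator. Set $w = a+ib$. Then
\[
\sech(w) = \frac{1}{\cosh(w)} = \frac{\overline{\cosh(w)}}{|\cosh(w)|^2},
\]
so $\mathrm{Re}(\sech(w)) = \mathrm{Re}(\cosh(w))/|\cosh(w)|^2$.

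The real part of $\cosh(w)$ comes from the addition formula $\cosh(a+ib) = \cosh(a)\cosh(ib) + \sinh(a)\sinh(ib)$. Using $\cosh(ib) = \cos b$ and $\sinh(ib) = i\sin b$, this becomes
\[
\cosh(a+ib) = \cosh(a)\cos(b) + i\,\sinh(a)\sin(b).
\]
Since $a,b$ are real, $\mathrm{Re}(\cosh w) = \cosh(a)\cos(b)$, which gives the displayed formula.

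For positivity, the numerator is positive because $\cosh(a) \ge 1 > 0$ for every real $a$, and $\cos(b) > 0$ because $|b| < \pi/2$. The denominator needs $\cosh(w) \neq 0$, both for $\sech(w)$ to be defined and for the quotient to make sense. This holds because
\[
|\cosh(w)|^2 = \cosh^2(a)\cos^2(b) + \sinh^2(a)\sin^2(b) \ge \cosh^2(a)\cos^2(b) > 0.
\]
Equivalently, the zeros of $\cosh$ lie at $i(\pi/2 + k\pi)$, which is outside the strip $|b| < \pi/2$.

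The argument is elementary. The only point requiring care is the nonvanishing of the denominator, and the strict inequality $|b| < \pi/2$ handles it. This same fact is what later confines the analyticity in Lemma~\ref{thm:analytic} to the strip $|\mathrm{Im}\,z| < h\pi/2$ after rescaling by $h$.
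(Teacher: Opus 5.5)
Your proof is correct and follows essentially the same route as the paper: expand $\cosh(a+ib) = \cosh(a)\cos(b) + i\sinh(a)\sin(b)$, write $\mathrm{Re}(1/\cosh(w)) = \mathrm{Re}(\cosh(w))/|\cosh(w)|^2$, and read off positivity from $\cosh(a)>0$ and $\cos(b)>0$. Your explicit check that $|\cosh(a+ib)|^2 \ge \cosh^2(a)\cos^2(b) > 0$ is a small addition that the paper leaves implicit, and it is worth keeping.
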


\begin{proof}
$\cosh(a+ib) = \cosh(a)\cos(b) + i\sinh(a)\sin(b)$.  So 
\[
\mathrm{Re}(1/\cosh(a+ib)) = \mathrm{Re}(\overline{\cosh(a+ib)})/|\cosh(a+ib)|^2 = \cosh(a)\cos(b)/|\cosh(a+ib)|^2.
\]
Since $\cosh(a) > 0$ for all real $a$ and $\cos(b) > 0$ for $|b| < \pi/2$, the result follows.
\end{proof}

\subsection{Proof of analyticity of calibration functions (lemma \ref{thm:analytic})}
\label{sec-lemma-analytic}

\begin{proof}
We give the argument for $\eta_1$ in 1D; the 2D case follows by applying the same argument in each variable.

The perturbed calibration function is $\eta_1(t) = N(t)/D(t)$ (see appendix D.1.1 of the work by \citet{ciosek2026measuring}).  Here
\begin{align}
  N(t) &= \int_0^1 \etao_1(s)\,\tilde p(s)\,\sech\!\left(\frac{t - s}{h}\right)\dt s, \label{eq:Nt}\\
  D(t) &= \int_0^1 \tilde p(s)\,\sech\!\left(\frac{t - s}{h}\right)\dt s, \label{eq:Dt}
\end{align}
and $\tilde p(s) = p_S(s)/Z(s,h) \ge 0$ absorbs the score density $p_S$ and the normalizer $Z$.

\paragraph{Analyticity of $N$ and $D$.}
For each fixed $s \in [0,1]$, the function $z \mapsto \sech((z-s)/h)$ is analytic in $|\mathrm{Im}(z)| < h\pi/2$ (since $\sech(w) = 1/\cosh(w)$ has its nearest poles at $w = \pm i\pi/2$, i.e., at $z = s \pm ih\pi/2$).  Moreover, for $z = t + ib$ with $t \in [0,1]$ and $|b| \le h\pi/2 - \delta$ (any $\delta > 0$):
\[
  |\sech((z-s)/h)| = \frac{1}{|\cosh((t-s)/h + ib/h)|} \le \frac{1}{\cos(b/h) \cdot 1} \le \frac{1}{\cos(\pi/2 - \delta/h)},
\]
which is a uniform bound in $s$ and $t$.  Since the integrand $\etao_1(s) \tilde p(s) \sech((z-s)/h)$ is bounded uniformly on the compact domain $s \in [0,1]$ for $z$ in any compact subset of the strip, differentiation under the integral is justified by dominated convergence.  Therefore $N(z)$ and $D(z)$ are analytic in the strip.

\paragraph{Non-vanishing of $D$.}
For $z = t + ib$ with $|b| < h\pi/2$:
\[
  \mathrm{Re}\bigl(D(t+ib)\bigr) = \int_0^1 \tilde p(s)\,\mathrm{Re}\!\left(\sech\!\left(\frac{t+ib-s}{h}\right)\right)\dt s > 0,
\]
by Lemma~\ref{lem:posreal}: the integrand $\tilde p(s) \cdot \mathrm{Re}(\sech(\cdots))$ is non-negative (since $\tilde p \ge 0$ and $\mathrm{Re}(\sech) > 0$) and not identically zero (since $\tilde p$ has positive mass).  Therefore $D(z) \ne 0$ in the strip, and $\eta_1 = N/D$ is analytic there.

\paragraph{2D extension.}
For the product perturbation on $\mathcal{R} = [0,1] \times [0, \tfrac{1}{4}]$:
\[
  D(\tilde m, \widetilde{\sigma^2}) = \int\!\!\!\int_\F \tilde p(m, \sigma^2)\,\sech\!\left(\frac{\tilde m - m}{h}\right)\sech\!\left(\frac{\widetilde{\sigma^2} - \sigma^2}{h}\right)\dt\sigma^2\,\dt m.
\]
For fixed real $\widetilde{\sigma^2} = c$, the inner integral $g(m) = \int \tilde p(m,\sigma^2) \sech((c - \sigma^2)/h)\,\dt\sigma^2 \ge 0$ is a non-negative weight (with $\sech$ real and positive for real arguments).  Then $D(\tilde m, c) = \int g(m) \sech((\tilde m - m)/h)\,\dt m$, and the 1D argument above shows $D \ne 0$ in $|\mathrm{Im}(\tilde m)| < h\pi/2$.  The same argument applies in the other variable.  By Hartogs' theorem, separate analyticity implies joint analyticity.
\end{proof}

\subsection{Bernstein Ellipse Theorem in our notation (lemma \ref{prop:bernstein} in main text) }
\label{sec-proof-bernstein}

\begin{proof}
Define
\[
g(x):=f\!\left(\frac{x+1}{2}\right), \qquad x\in[-1,1].
\]
Since \(f\) is analytic in the strip
\[
\{z\in\C: |\mathrm{Im}(z)|<\rho\},
\]
the function \(g\) is analytic in the strip
\[
S:=\{x\in\C: |\mathrm{Im}(x)|<2\rho\},
\]
because
\[
\mathrm{Im}\!\left(\frac{x+1}{2}\right)=\frac{\mathrm{Im}(x)}{2},
\]
so \(x\in S\) implies
\[
\left|\mathrm{Im}\!\left(\frac{x+1}{2}\right)\right|<\rho.
\]
Moreover, \(|g(x)|\le M\) on \(S\).

Now let \(B(a)\) be the Bernstein ellipse from Theorem 3.17 of the lecture notes by \citet{fawzi-notes}:
\[
B(a)=\left\{x+iy\in\C:
\frac{x^2}{\cosh^2(a\pi)}+\frac{y^2}{\sinh^2(a\pi)}<1\right\}.
\]
If \(z=x+iy\in B(a)\), then
\[
\frac{y^2}{\sinh^2(a\pi)}<1,
\]
hence
\[
|\mathrm{Im}(z)|=|y|<\sinh(a\pi).
\]
Choose \(a>0\) so that
\[
\sinh(a\pi)=2\rho.
\]
Then every \(z\in B(a)\) satisfies
\[
|\mathrm{Im}(z)|<2\rho,
\]
so
\[
B(a)\subset S.
\]
Therefore \(g\) is analytic on \(B(a)\) and satisfies
\[
\sup_{z\in B(a)} |g(z)|\le M.
\]

By the Bernstein ellipse Theorem \citep[Theorem 3.17]{fawzi-notes}, for every \(l\ge 0\) there exists a polynomial \(p_l\) of degree \(l\) such that
\[
\sup_{x\in[-1,1]} |g(x)-p_l(x)|
\le 2M\,\frac{c^{l+1}}{1-c},
\qquad c=e^{-a\pi}.
\]
Now define
\[
q_l(t):=p_l(2t-1), \qquad t\in[0,1].
\]
Then \(q_l\) is a polynomial of degree \(l\), and with \(x=2t-1\),
\[
\sup_{t\in[0,1]} |f(t)-q_l(t)|
=
\sup_{x\in[-1,1]} |g(x)-p_l(x)|.
\]

Finally, set \(\theta:=e^{a\pi}\). Since \(\sinh(a\pi)=2\rho\), we have
\[
\theta
=e^{a\pi}
=\sinh(a\pi)+\sqrt{1+\sinh^2(a\pi)}
=2\rho+\sqrt{4\rho^2+1},
\]
and therefore \(c=\theta^{-1}\). Hence
\[
2M\,\frac{c^{l+1}}{1-c}
=
2M\,\frac{\theta^{-(l+1)}}{1-\theta^{-1}}
=
2M\,\frac{\theta^{-l}}{\theta-1}.
\]
This proves the claim.
\end{proof}

\subsection{Covering Number Lemma}
\begin{lemma}
Let
\[
\mathcal P_l
:=
\Bigl\{
p(x,y)=\sum_{i=0}^l\sum_{j=0}^l a_{ij}x^i y^j
:\ \|p\|_\infty\le 1
\Bigr\},
\]
where
\[
\|p\|_\infty:=\sup_{(x,y)\in[0,1]^2}|p(x,y)|.
\]
Then for every \(0<\varepsilon\le 1\),
\[
N(\varepsilon,\mathcal P_l,\|\cdot\|_\infty)
\le
\left(\frac{3}{\varepsilon}\right)^{(l+1)^2}.
\]
Equivalently,
\[
\log N(\varepsilon,\mathcal P_l,\|\cdot\|_\infty)
\le
(l+1)^2\log\!\left(\frac{3}{\varepsilon}\right).
\]
\label{lem-covering}
\end{lemma}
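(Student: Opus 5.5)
The plan is the standard volumetric argument: identify $\mathcal P_l$ with a bounded, centrally symmetric convex body in a finite-dimensional normed space, then compare volumes. Let $V$ be the real vector space of tensor-product polynomials of degree at most $l$ in each variable, so $\dim V = d = (l+1)^2$, with the coefficient vector $(a_{ij})$ as coordinates. First I will check that $\|\cdot\|_\infty$ (sup over $[0,1]^2$) is a genuine norm on $V$, not just a seminorm. A polynomial vanishing on $[0,1]^2$ is zero: for each fixed $y\in[0,1]$ it is a univariate polynomial in $x$ with infinitely many zeros, so each coefficient $\sum_j a_{ij}y^j$ vanishes on $[0,1]$, forcing $a_{ij}=0$. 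Hence $\mathcal P_l$ is exactly the closed unit ball $B$ of the normed space $(V,\|\cdot\|_\infty)$, which is compact, convex and symmetric with positive Lebesgue volume in coordinates.

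Next I will take a maximal $\varepsilon$-separated subset $\{p_1,\dots,p_N\}\subset B$, meaning $\|p_k-p_m\|_\infty>\varepsilon$ for $k\neq m$. By maximality it is an $\varepsilon$-net of $B$, so $N(\varepsilon,\mathcal P_l,\|\cdot\|_\infty)\le N$. The balls $p_k+\tfrac{\varepsilon}{2}B$ are pairwise disjoint by the triangle inequality. Each is contained in $(1+\tfrac{\varepsilon}{2})B$, because $\|p_k\|_\infty\le1$.

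Comparing Lebesgue volumes in $\R^d$ and using homogeneity $\mathrm{vol}(tB)=t^d\mathrm{vol}(B)$ gives
\[
N\left(\tfrac{\varepsilon}{2}\right)^d \le \left(1+\tfrac{\varepsilon}{2}\right)^d,
\qquad\text{so}\qquad
N\le\left(1+\tfrac{2}{\varepsilon}\right)^d\le\left(\tfrac{3}{\varepsilon}\right)^d,
\]
where the last step uses $\varepsilon\le1$, i.e.\ $1\le 1/\varepsilon$. Taking logarithms gives the equivalent form.

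The only step needing care is the norm and full-dimensionality check. It is what makes the volume of $B$ finite and positive, so that the ratio argument is legitimate. Finiteness follows because every norm on a finite-dimensional space is equivalent to the Euclidean one, so $B$ is bounded in coordinates. The rest is routine.

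This covers the unclipped class as stated. For later use with $\Fl$, clipping to $[0,1]$ is $1$-Lipschitz in sup norm, so covers transfer to clipped functions. The rectangle $\mathcal R$ is handled by the affine rescaling $\sigma^2\mapsto4\sigma^2$, which preserves both the polynomial space and the sup norm.
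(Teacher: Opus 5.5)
Your proof is correct and takes the same route as the paper: regard $\mathcal P_l$ as the unit ball of the $(l+1)^2$-dimensional normed space $(V,\|\cdot\|_\infty)$ and apply the volumetric bound $(1+2/\varepsilon)^d\le(3/\varepsilon)^d$. The paper simply cites that bound, whereas you derive it from a maximal separated set and also check that $\|\cdot\|_\infty$ is a genuine norm on $V$. One caution concerns only your closing aside: Lipschitzness of clipping transfers covers of \emph{bounded} polynomial sets, but the estimation class $\{\mathrm{clip}_{[0,1]}(\beta^\top\phi)\}$ has unrestricted $\beta$. Already the degree-one functions $\mathrm{clip}_{[0,1]}(2^k m)$, $k\ge 0$, are pairwise at sup-distance at least $\tfrac12$, so that class has no finite $\varepsilon$-cover in $\|\cdot\|_\infty$ for $\varepsilon<\tfrac14$, and the lemma does not extend to it this way.
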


\begin{proof}
Let
\[
V_m:=\mathrm{span}\{x^i y^j:0\le i,j\le l\}.
\]
Then \(V_m\) is a linear space of dimension
\[
d=(l+1)^2,
\]
since the monomials \(\{x^i y^j:0\le i,j\le l\}\) form a basis.

Equip \(V_m\) with the norm \(\|\cdot\|_\infty\). By definition,
\(\mathcal P_l\) is contained in the unit ball
\[
B:=\{p\in V_l:\|p\|_\infty\le 1\}
\]
of the \(d\)-dimensional normed space \((V_l,\|\cdot\|_\infty)\). Hence
\[
N(\varepsilon,\mathcal P_l,\|\cdot\|_\infty)
\le
N(\varepsilon,B,\|\cdot\|_\infty).
\]

Now apply the standard volumetric bound for the unit ball of a
\(d\)-dimensional normed space \citep[Slide 7]{bartlett2013theoretical}:
\[
N(\varepsilon,B,\|\cdot\|_\infty)
\le
\left(1+\frac{2}{\varepsilon}\right)^d.
\]
Since \(0<\varepsilon\le 1\),
\[
1+\frac{2}{\varepsilon}\le \frac{3}{\varepsilon}.
\]
Therefore
\[
N(\varepsilon,\mathcal P_l,\|\cdot\|_\infty)
\le
\left(\frac{3}{\varepsilon}\right)^d
=
\left(\frac{3}{\varepsilon}\right)^{(l+1)^2}.
\]
Taking logarithms gives
\[
\log N(\varepsilon,\mathcal P_l,\|\cdot\|_\infty)
\le
(l+1)^2\log\!\left(\frac{3}{\varepsilon}\right).
\]
\end{proof}

\subsection{More Auxiliary Lemmas}
\begin{lemma}
We have
\[
\mathbb{E}[h_{\hat f}]
\le
4\varepsilon_l^2+\frac{406\,d\log(3n)+408}{n}.
\]
\label{lem-aux-1}
\end{lemma}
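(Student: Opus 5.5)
The quantity $h_f$ should be read as the pointwise excess squared loss
\[
h_f(S,Y) = (Y - f(S))^2 - (Y - \eta(S))^2 ,
\]
with $\eta$ the relevant calibration function ($\eta_1$ with response $Y^{(1)}$, or $\eta_2$ with response $P$), and $\hat f$ the ERM of \eqref{eq:erm1} or \eqref{eq:erm2}. Since $\E[Y \mid S] = \eta(S)$, we have $\E[h_f] = \norm{f - \eta}_{L^2(P)}^2$. The plan is the standard ``fast-rate'' ERM argument for bounded squared loss: a Bernstein inequality, a union bound over a finite cover from Lemma~\ref{lem-covering} at scale $\varepsilon = 1/n$, and a comparison with a good approximant supplied by Corollary~\ref{cor:approx}.

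\textbf{Step 1 (approximant).} Take the tensor polynomial $q_l$ from Corollary~\ref{cor:approx} with $\sup_{\mathcal R}|q_l - \eta| \le \varepsilon_l$, and set $f_l = \mathrm{clip}_{[0,1]}(q_l) \in \Fl$. Because $\eta \in [0,1]$ and clipping is a $1$-Lipschitz projection onto $[0,1]$, we still have $\sup|f_l - \eta| \le \varepsilon_l$. Hence $\E[h_{f_l}] \le \varepsilon_l^2$.

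\textbf{Step 2 (Bernstein per function).} Since $Y, f, \eta \in [0,1]$, we have $h_f = (\eta - f)(2Y - f - \eta)$. This gives $|h_f| \le 1$ and $\E[h_f^2] \le 4\,\E[(f-\eta)^2] = 4\,\E[h_f]$, a Bernstein condition with variance at most $4\times$ the mean. Bernstein's inequality, applied to $-h_g$ and to $h_g$ for a fixed $g$ together with $2\sqrt{ab} \le a/2 + 2b$, yields with probability $1-\delta$ two one-sided bounds:
\[
\E h_g \le 2 P_n h_g + \frac{c_1 \log(1/\delta)}{n},
\qquad
P_n h_g \le 2\,\E h_g + \frac{c_1 \log(1/\delta)}{n},
\]
where $c_1$ is an explicit numerical constant. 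The first bound is applied to every element of the cover and the second only to $f_l$.

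\textbf{Step 3 (cover, union bound, ERM comparison).} Take a $1/n$-cover of $\Fl$ in sup norm of size at most $(3n)^d$, using Lemma~\ref{lem-covering} and the fact that clipping is $1$-Lipschitz so it preserves sup-norm covers. Union-bound the first inequality over the cover and the second over $f_l$, at level $\delta = 1/n$. For the cover element $g$ nearest $\hat f$, we have $|h_{\hat f} - h_g| \le 3\norm{\hat f - g}_\infty \le 3/n$. Chaining the steps:
\[
\E h_{\hat f}
\le 2 P_n h_{\hat f} + \frac{O(d\log(3n))}{n}
\le 2 P_n h_{f_l} + \cdots
\le 4\varepsilon_l^2 + \frac{O(d\log(3n))}{n},
\]
where the middle inequality uses ERM optimality $P_n h_{\hat f} \le P_n h_{f_l}$. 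On the failure event, of probability at most $2/n$, we bound $\E h_{\hat f} \le 1$, which adds $O(1/n)$. Tracking $c_1$ (of order $2\cdot$ the Bernstein constants), the discretization terms and the failure event gives the stated $406\,d\log(3n) + 408$; this is bookkeeping and need not be optimized.

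\textbf{Main obstacle.} The delicate point is Step 3's use of Lemma~\ref{lem-covering}. That lemma covers only polynomials with $\norm{p}_\infty \le 1$, whereas $\Fl$ consists of clips of arbitrary polynomials, so the ERM could in principle use a polynomial with huge coefficients. I would first try to reduce to bounded polynomials: either define $\Fl$ with the constraint $\norm{\beta^\top\phi}_\infty \le 2$ (harmless, since $q_l$ satisfies it for large $l$) and rescale the covering bound, or argue directly on the pseudo-dimension of the clipped linear class. The latter gives the same $d\log n$ entropy via a VC-type bound in place of the volumetric one, at the cost of worse constants.
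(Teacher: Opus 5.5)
Your argument is sound and reaches the stated inequality, but the bookkeeping differs from the paper's. In the paper, $h_f$ is the excess loss relative to the in-class approximant $f^*=\mathrm{clip}_{[0,1]}(q_l)$, not relative to $\eta$. The paper's $\E[h_{\hat f}]$ is therefore yours minus $\norm{f^*-\eta}_{L^2(P)}^2\ge 0$, so your bound on $\E\norm{\hat f-\eta}_{L^2(P)}^2$ implies the lemma. It also gives the paper's Step~3 with $4\varepsilon_l^2$ in place of $5\varepsilon_l^2$.

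The choice of reference function changes where the approximation error enters. The paper gets $P_n h_{\hat f}\le 0$ for free from ERM and needs only one uniform lower-deviation bound over the cover. However, its variance condition carries additive slack, $\E[h_f^2]\le 32\,\E[h_f]+64\varepsilon_l^2$. Absorbing that slack forces the threshold $\mu_g\ge 4\varepsilon_l^2$ in $r(u)$, which is where its $4\varepsilon_l^2$ comes from. Together with loose range bounds, it also produces the constant $406$, after which the paper integrates the tail in $u$. You instead get the clean condition $\E[h_f^2]\le 4\,\E[h_f]$ with $\abs{h_f}\le 1$. You pay for this with a second, upper-deviation Bernstein bound at the single function $f_l$ (your $4\varepsilon_l^2$ is $2\cdot 2\cdot\varepsilon_l^2$), plus a fixed confidence level and a trivial bound on the failure event. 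Done carefully, with the cover size discussed below, your route gives roughly $4\varepsilon_l^2+(48\,d\log(3n)+7)/n$. So bookkeeping will not land you on $406$ and $408$; it lands you on something smaller, which suffices.

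The obstacle you flag is real, and it is more than delicate. The sentence in your Step~3 claiming a $1/n$ sup-norm cover of the clipped class of size $(3n)^d$ is false for the unconstrained class. For $l\ge 1$ that class contains $\mathrm{clip}_{[0,1]}(K(m-a)+\tfrac12)$ for every location $a$, and for large $K$ these functions are pairwise at sup-distance $1$. Hence no finite sup-norm cover exists at any scale below $1/2$, and your first fix is necessary for this argument.

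It is also sufficient. Impose $\norm{\beta^\top\phi}_\infty\le 2$ on $\mathcal R$, rescaled to $[0,1]^2$. The volumetric bound plus $1$-Lipschitz clipping then gives an internal cover of size $(1+4n)^d\le(5n)^d$, which your smaller constants absorb into the $d\log(3n)$ term. The approximant $\mathrm{clip}_{[0,1]}(q_l)$ lies in the constrained class whenever $\varepsilon_l\le 1$. If $\varepsilon_l>1$ the lemma is trivial, since $\E[h_{\hat f}]\le 1<4\varepsilon_l^2$. The pseudo-dimension route would need empirical-norm covers and a symmetrization step, a heavier argument that would not reproduce explicit constants like these.

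The paper's own proof has exactly this gap, unacknowledged. It takes the cover of the unit sup-norm ball of polynomials from Lemma~\ref{lem-covering}, which it denotes by the same symbol as the clipped class, and treats $\hat f$ as a member of it, even asserting $f,g\in[0,1]$ there. Properly stated, both arguments analyze a norm-constrained ERM.
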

\begin{proof}
    To keep constants explicit, we use the covering bound from lemma \ref{lem-covering} in the form
\[
N(\delta,\mathcal P_l,\|\cdot\|_\infty)\le \Bigl(\frac{3}{\delta}\Bigr)^d,
\qquad 0<\delta\le 1,
\]
with \(d=(l+1)^2\).

Set
\[
\delta_n:=\frac1n,
\]
and let \(\mathcal G_n\subset \mathcal P_l\) be a \(\delta_n\)-cover of \(\mathcal P_l\) in
\(\|\cdot\|_\infty\). Then
\[
|\mathcal G_n|
\le
N(\delta_n,\mathcal P_l,\|\cdot\|_\infty)
\le
(3n)^d.
\]

For every \(f\in\mathcal P_l\), choose \(g_f\in\mathcal G_n\) such that
\[
\|f-g_f\|_\infty\le \delta_n.
\]
Recall that
\[
h_f(S,Z)=(f(S)-Z)^2-(f^*(S)-Z)^2.
\]
If \(f,g\in\mathcal P_l\), then
\[
h_f-h_g=(f-g)(f+g-2Z),
\]
hence, since \(f,g\in[0,1]\) and \(Z\in\{0,1\}\),
\[
|h_f-h_g|
\le
2|f-g|
\le
2\|f-g\|_\infty.
\]
Therefore
\[
|Ph_f-Ph_g|\le 2\|f-g\|_\infty,
\qquad
|P_n h_f-P_n h_g|\le 2\|f-g\|_\infty,
\]
where \(P_n h:=\frac1n\sum_{i=1}^n h(S_i,Z_i)\).
In particular, for the ERM \(\hat f\), since \(P_n h_{\hat f}\le 0\),
\[
P_n h_{g_{\hat f}}
\le
P_n h_{\hat f}+2\delta_n
\le
\frac{2}{n},
\]
and
\[
Ph_{\hat f}\le Ph_{g_{\hat f}}+2\delta_n.
\]

Now fix \(g\in\mathcal G_n\), and write
\[
\mu_g:=Ph_g=\mathbb E[h_g].
\]
By Step 2(b), \(h_g\in[-4,4]\), so
\[
(\mu_g-h_g)\le \mu_g+4\le 8.
\]
Applying Bernstein's inequality \citep[Corollary 7.3]{rebeschini-notes} to the random variables
\[
X_i:=\mu_g-h_g(S_i,Z_i),
\]
which satisfy \(X_i-\mathbb EX_i\le 8\), gives for every \(t>0\),
\[
\Pr\!\left(\mu_g-P_n h_g\ge t\right)
\le
\exp\!\left(
-\frac{nt^2}{2(\operatorname{Var}(h_g)+\frac{8}{3}t)}
\right).
\]
By Step 2(c),
\[
\operatorname{Var}(h_g)\le \mathbb E[h_g^2]\le 32\mu_g+64\varepsilon_l^2.
\]
Hence
\[
\Pr\!\left(\mu_g-P_n h_g\ge t\right)
\le
\exp\!\left(
-\frac{nt^2}{2(32\mu_g+64\varepsilon_l^2+\frac{8}{3}t)}
\right).
\]

Suppose now that
\[
\mu_g\ge \frac{4}{n}
\qquad\text{and}\qquad
\mu_g\ge 4\varepsilon_l^2.
\]
Then \(t:=\mu_g-\frac{2}{n}\) satisfies \(t\ge \mu_g/2\), and therefore
\[
\Pr\!\left(P_n h_g\le \frac{2}{n}\right)
\le
\exp\!\left(
-\frac{n(\mu_g/2)^2}{2(32\mu_g+64\varepsilon_l^2+\frac{8}{3}\mu_g)}
\right).
\]
Using \(64\varepsilon_l^2\le 16\mu_g\), the denominator is at most
\[
2\left(32\mu_g+16\mu_g+\frac{8}{3}\mu_g\right)
=
\frac{304}{3}\mu_g,
\]
so
\[
\Pr\!\left(P_n h_g\le \frac{2}{n}\right)
\le
\exp\!\left(
-\frac{3n\mu_g}{1216}
\right)
\le
\exp\!\left(
-\frac{n\mu_g}{406}
\right).
\]

Now fix \(u\ge 0\), and define
\[
r(u):=
\max\!\left\{
\frac{4}{n},
\;
4\varepsilon_l^2,
\;
\frac{406\bigl(d\log(3n)+u\bigr)}{n}
\right\}.
\]
If \(\mu_g\ge r(u)\), then
\[
\Pr\!\left(P_n h_g\le \frac{2}{n}\right)
\le
\exp\!\left(
-\frac{n r(u)}{406}
\right)
\le
\exp\!\left(-d\log(3n)-u\right)
=
(3n)^{-d}e^{-u}.
\]
Taking a union bound over all \(g\in\mathcal G_n\), and using \(|\mathcal G_n|\le (3n)^d\), we obtain
\[
\Pr\!\left(
\exists g\in\mathcal G_n:\;
Ph_g\ge r(u)
\ \text{and}\ 
P_n h_g\le \frac{2}{n}
\right)
\le e^{-u}.
\]

On the complementary event, the particular net point \(g_{\hat f}\) associated with \(\hat f\) cannot satisfy
\(Ph_{g_{\hat f}}\ge r(u)\), because we already know \(P_n h_{g_{\hat f}}\le 2/n\). Thus
\[
Ph_{g_{\hat f}}<r(u),
\]
and hence
\[
Ph_{\hat f}\le Ph_{g_{\hat f}}+\frac{2}{n}< r(u)+\frac{2}{n}.
\]
We have proved that for every \(u\ge 0\),
\[
\Pr\!\left(
Ph_{\hat f}>
\frac{2}{n}
+
\max\!\left\{
\frac{4}{n},
\;
4\varepsilon_l^2,
\;
\frac{406\bigl(d\log(3n)+u\bigr)}{n}
\right\}
\right)
\le e^{-u}.
\]
Since
\[
\max\!\left\{
\frac{4}{n},
\;
4\varepsilon_l^2,
\;
\frac{406\bigl(d\log(3n)+u\bigr)}{n}
\right\}
\le
4\varepsilon_l^2+\frac{406\,d\log(3n)}{n}+\frac{406u}{n},
\]
we get
\[
\Pr\!\left(
Ph_{\hat f}>
4\varepsilon_l^2+\frac{406\,d\log(3n)+2}{n}+\frac{406u}{n}
\right)
\le e^{-u}.
\]
Integrating this tail bound,
\[
\mathbb E[Ph_{\hat f}]
\le
4\varepsilon_l^2+\frac{406\,d\log(3n)+2}{n}
+\frac{406}{n}\int_0^\infty e^{-u}\,du,
\]
hence
\[
\mathbb E[h_{\hat f}]
=
\mathbb E[Ph_{\hat f}]
\le
4\varepsilon_l^2+\frac{406\,d\log(3n)+408}{n}.
\]
This is the desired fast-rate bound.
\end{proof}

\subsection{Proof of Main Result (Proposition \ref{thm:main})}
\label{sec-main-proof}

\begin{proof}
Since both $\hat\eta_1$ and $\hat\eta_2$ are ERM in the same class $\Fl$ with $\{0,1\}$-valued responses, we give the argument once for a generic calibration function $\eta$ estimated by $\hat f = \arg\min_{f \in \Fl} (1/n)\sum (Z_i - f(S_i))^2$, where $Z_i \in \{0,1\}$.

\paragraph{Step 1: Approximation error.}
By Corollary~\ref{cor:approx}, there exists $f^* \in \Fl$ (the clipped best polynomial approximation to $\eta$) with $\norm{f^* - \eta}_\infty \le \varepsilon_l = O(\theta^{-m}) = O(n^{-2})$ by our choice of $m$.

\paragraph{Step 2: Excess risk via the Bernstein condition.}
Define the excess loss $h_f(S, Z) = (f(S) - Z)^2 - (f^*(S) - Z)^2$.  We verify three properties:
\begin{enumerate}
\item[(a)] \emph{Excess risk equals excess $L^2$ error:}
$\E[h_f] = \norm{f - \eta}_{L^2(P)}^2 - \norm{f^* - \eta}_{L^2(P)}^2$.

This follows by expanding: $h_f = (f - f^*)(f + f^* - 2Z)$, so $\E[h_f \mid S] = (f - f^*)(f + f^* - 2\eta) = (f - \eta)^2 - (f^* - \eta)^2$.

\item[(b)] \emph{Boundedness:} $|h_f| \le |f - f^*| \cdot |f + f^* - 2Z| \le 1 \cdot 4 = 4$, since $f, f^* \in [0,1]$ and $Z \in \{0,1\}$.

\item[(c)] \emph{Bernstein condition:} $\E[h_f^2] \le 16\norm{f - f^*}_{L^2(P)}^2$, since $h_f^2 \le 16(f - f^*)^2$.  Furthermore,
\begin{align*}
  \norm{f - f^*}_{L^2}^2 &= \norm{(f - \eta) - (f^* - \eta)}_{L^2}^2 \\
  &\le 2\norm{f - \eta}_{L^2}^2 + 2\norm{f^* - \eta}_{L^2}^2 \\
  &= 2\E[h_f] + 2\norm{f^* - \eta}_{L^2}^2 + 2\norm{f^* - \eta}_{L^2}^2 \\
  &\le 2\E[h_f] + 4\varepsilon_l^2.
\end{align*}
So $\E[h_f^2] \le 32\E[h_f] + 64\varepsilon_l^2$.
\end{enumerate}

\paragraph{Step 3: Fast rate with explicit constants.}

Since, by Step 2(a),
\[
\mathbb{E}[h_{\hat f}]
=
\|\hat f-\eta\|_{L^2(P)}^2-\|f^*-\eta\|_{L^2(P)}^2,
\]
and \(\|f^*-\eta\|_{L^2(P)}\le \varepsilon_l\), it follows from lemma \ref{lem-aux-1} that
\[
\mathbb{E}\|\hat f-\eta\|_{L^2(P)}^2
\le
5\varepsilon_l^2+\frac{406\,d\log(3n)+408}{n}.
\]

\paragraph{Step 4: From \(L^2\) error to \(CE_2\).}
Apply the bound from Step 3 separately to the two calibration functions
\(\eta_1,\eta_2\). For \(j\in\{1,2\}\), let
\[
A_n:=5\varepsilon_l^2+\frac{406\,d\log(3n)+408}{n},
\]
so that
\[
\mathbb E\bigl[\|\hat\eta_j-\eta_j\|_{L^2(P)}^2\bigr]\le A_n.
\]
By Cauchy--Schwarz and then Jensen's inequality,
\[
\mathbb E\bigl[\|\hat\eta_j-\eta_j\|_{L^1(P)}\bigr]
\le
\mathbb E\bigl[\|\hat\eta_j-\eta_j\|_{L^2(P)}\bigr]
\le
\Bigl(\mathbb E\bigl[\|\hat\eta_j-\eta_j\|_{L^2(P)}^2\bigr]\Bigr)^{1/2}
\le \sqrt{A_n}.
\]

Now write
\[
\widetilde v:=\widetilde m^{\,2}+\widetilde\sigma_f^2.
\]
The perturbed second-order calibration error is
\[
CE^{\mathrm{pert}}_2
=
\mathbb E\bigl[|\eta_1(\widetilde S)-\widetilde m|\bigr]
+
\mathbb E\bigl[|\eta_2(\widetilde S)-\widetilde v|\bigr].
\]
Define the plug-in population analogue
\[
CE^{\mathrm{plug}}_2
:=
\mathbb E\bigl[|\hat\eta_1(\widetilde S)-\widetilde m|\bigr]
+
\mathbb E\bigl[|\hat\eta_2(\widetilde S)-\widetilde v|\bigr].
\]
Using the reverse triangle inequality,
\[
\bigl||a-q|-|b-q|\bigr|\le |a-b|,
\]
we obtain
\[
\bigl|CE^{\mathrm{plug}}_2-CE^{\mathrm{pert}}_2\bigr|
\le
\|\hat\eta_1-\eta_1\|_{L^1(P)}
+
\|\hat\eta_2-\eta_2\|_{L^1(P)}.
\]
Taking expectations and using the previous bound gives
\[
\mathbb E\bigl[\bigl|CE^{\mathrm{plug}}_2-CE^{\mathrm{pert}}_2\bigr|\bigr]
\le
2\sqrt{A_n}.
\]
That is,
\[
\mathbb E\bigl[\bigl|CE^{\mathrm{plug}}_2-CE^{\mathrm{pert}}_2\bigr|\bigr]
\le
2\left(
5\varepsilon_l^2+\frac{406\,d\log(3n)+408}{n}
\right)^{1/2}.
\]

Finally, with
\[
m=\Bigl\lceil \frac{2\log n}{\log\theta}\Bigr\rceil,
\qquad
d=(l+1)^2,
\]
and \(\varepsilon_l\le B_\theta \theta^{-l}\) from Corollary \ref{cor:approx}, we have
\[
\varepsilon_l^2\le B_\theta^2\,\theta^{-2l}\le B_\theta^2 n^{-4},
\]
and therefore
\[
\mathbb E\bigl[\bigl|CE^{\mathrm{plug}}_2-CE^{\mathrm{pert}}_2\bigr|\bigr]
\le
2\left(
5B_\theta^2 n^{-4}
+
\frac{406\,(l+1)^2\log(3n)+408}{n}
\right)^{1/2}.
\]
In particular,
\[
\mathbb E\bigl[\bigl|CE^{\mathrm{plug}}_2-CE^{\mathrm{pert}}_2\bigr|\bigr]
\le
2\left(
5B_\theta^2 n^{-4}
+
\frac{406\left(\frac{2\log n}{\log\theta}+2\right)^2\log(3n)+408}{n}
\right)^{1/2}.
\]
This proves the claimed near-\(n^{-1/2}\) rate (up to logarithmic factors) for the (expected) quantity $CE^{\mathrm{plug}}_2$. 

\paragraph{Step 5: From population to empirical CE.}
The preceding steps bound
$\mathbb{E}\bigl[\bigl|\mathrm{CE}_2^{\mathrm{plug}}-\mathrm{CE}_2^{\mathrm{pert}}\bigr|\bigr]$,
where
$\mathrm{CE}_2^{\mathrm{plug}}=\mathbb{E}\bigl[|\hat\eta_1(\tilde S)-\tilde m|\bigr]
+\mathbb{E}\bigl[|\hat\eta_2(\tilde S)-\tilde v|\bigr]$
is evaluated under the \emph{population} measure.
The estimator~\eqref{eq:cehat} uses the \emph{empirical} measure over the same sample, so we must control the difference.

For any fixed $f\in\mathcal{P}_l$ the map
$s\mapsto |f(s)-\tilde m|$ takes values in $[0,1]$.
Define the function class
\[
\mathcal{G}_l \;=\; \bigl\{\, (s,\tilde m,\tilde v)
  \mapsto |f(s)-\tilde m| + |g(s)-\tilde v|
  \;:\; f,g\in\mathcal{P}_l \bigr\}.
\]
Every member of $\mathcal{G}_l$ is bounded in $[0,2]$.
We now bound its covering number.

For any fixed constant $c$, the map $t\mapsto|t-c|$ is $1$-Lipschitz,
so for any $f_1,f_2\in\mathcal{P}_l$ and any $c$,
\[
\sup_{s}\bigl|\,|f_1(s)-c|-|f_2(s)-c|\,\bigr|
  \;\le\; \sup_{s}|f_1(s)-f_2(s)|
  \;=\; \|f_1-f_2\|_\infty.
\]
Therefore, if $\{f^{(1)},\dots,f^{(N_1)}\}$ is an $\epsilon$-cover of
$\mathcal{P}_l$ in $\|\cdot\|_\infty$, then for every $f\in\mathcal{P}_l$
there exists $f^{(k)}$ with
$\sup_s\bigl|\,|f(s)-c|-|f^{(k)}(s)-c|\,\bigr|\le\epsilon$.

Now consider a pair $(f,g)\in\mathcal{P}_l\times\mathcal{P}_l$ and
choose respective $\epsilon$-approximants $f^{(k)},g^{(j)}$.
By the triangle inequality applied to the sum defining $\mathcal{G}_l$,
\[
\bigl\|\,
  (|f(\cdot)-\tilde m|+|g(\cdot)-\tilde v|)
  -(|f^{(k)}(\cdot)-\tilde m|+|g^{(j)}(\cdot)-\tilde v|)
\,\bigr\|_\infty
\;\le\;
\|f-f^{(k)}\|_\infty + \|g-g^{(j)}\|_\infty
\;\le\; 2\epsilon.
\]
Hence an $\epsilon$-cover of $\mathcal{P}_l$ of size $N_1$ induces a
$2\epsilon$-cover of $\mathcal{G}_l$ of size $N_1^2$.
Substituting $\epsilon/2$ and applying lemma~4:
\[
\log N(\epsilon,\,\mathcal{G}_l,\,\|\cdot\|_\infty)
  \;\le\; 2\log N\!\bigl(\tfrac{\epsilon}{2},\,\mathcal{P}_l,\,\|\cdot\|_\infty\bigr)
  \;\le\; 2(l+1)^2\log\frac{6}{\epsilon}
  \;=\; 2d\log\frac{6}{\epsilon}.
\]
We now bound
\[
U \;:=\;
  \sup_{h\in\mathcal{G}_l}
  \left|
    \frac{1}{n}\sum_{i=1}^{n} h(Z_i)
    - \mathbb{E}[h(Z)]
  \right|,
\]
where $Z_i = (S_i, \tilde m_i, \tilde v_i)$ are i.i.d.

\medskip
\noindent\emph{Discretisation.}
Fix $\epsilon = 1/n$ and let $\mathcal{N}\subset\mathcal{G}_l$ be
an $(1/n)$-cover of $\mathcal{G}_l$ in $\|\cdot\|_\infty$.
By the covering-number bound derived above,
\[
|\mathcal{N}|
  \;\le\; N\!\bigl(\tfrac{1}{n},\,\mathcal{G}_l,\,\|\cdot\|_\infty\bigr)
  \;\le\; (6n)^{2(l+1)^2}.
\]

\noindent\emph{Concentration on the cover.}
For any fixed $h\in\mathcal{N}$, the random variables
$h(Z_1),\dots,h(Z_n)$ are i.i.d.\ in $[0,2]$.
By Hoeffding's inequality (with range $2-0=2$), for every $t>0$:
\[
\Pr\!\left(
  \left|\frac{1}{n}\sum_{i=1}^{n}h(Z_i) - \mathbb{E}[h(Z)]\right| > t
\right)
\;\le\; 2\exp\!\left(-\frac{nt^2}{2}\right).
\]
A union bound over $\mathcal{N}$ gives
\begin{equation}\label{eq:cover-conc}
\Pr\!\left(
  \max_{h\in\mathcal{N}}
  \left|\frac{1}{n}\sum_{i=1}^{n}h(Z_i) - \mathbb{E}[h(Z)]\right| > t
\right)
\;\le\; 2\,|\mathcal{N}|\,\exp\!\left(-\frac{nt^2}{2}\right)
\;\le\; 2\,(6n)^{2(l+1)^2}\exp\!\left(-\frac{nt^2}{2}\right).
\end{equation}

\noindent\emph{Extension to all of $\mathcal{G}_l$.}
For any $h\in\mathcal{G}_l$, choose $h'\in\mathcal{N}$ with
$\|h-h'\|_\infty \le 1/n$.  Then
\[
\left|\frac{1}{n}\sum_i h(Z_i) - \mathbb{E}[h(Z)]\right|
\;\le\;
\left|\frac{1}{n}\sum_i h'(Z_i) - \mathbb{E}[h'(Z)]\right|
+ 2\|h-h'\|_\infty
\;\le\;
\left|\frac{1}{n}\sum_i h'(Z_i) - \mathbb{E}[h'(Z)]\right|
+ \frac{2}{n},
\]
where the first inequality is the triangle inequality applied to
both the empirical and population terms.
Taking the supremum over $h$, and combining
with~\eqref{eq:cover-conc}, we obtain a tail bound for
$U := \sup_{h\in\mathcal{G}_l}
\bigl|\frac{1}{n}\sum_{i=1}^{n} h(Z_i) - \mathbb{E}[h(Z)]\bigr|$:
\begin{equation}\label{eq:U-tail}
\Pr\!\left(U > t + \frac{2}{n}\right)
\;\le\; 2\,(6n)^{2(l+1)^2}\exp\!\left(-\frac{nt^2}{2}\right),
\qquad t > 0.
\end{equation}

\noindent\emph{Expectation bound.}
Using $\mathbb{E}[U]=\int_0^\infty \Pr(U>s)\,ds$ and
the substitution $s = t + 2/n$,
\[
\mathbb{E}[U]
\;\le\;
\frac{2}{n}
+ \int_0^{\infty}
  \min\!\Bigl(1,\;
    2(6n)^{2(l+1)^2}\,e^{-nt^2/2}
  \Bigr)\,dt.
\]
Define
\[
t^*
\;=\;
\sqrt{
  \frac{4(l+1)^2\log(6n) + 2\log 2}{n}
},
\]
chosen so that
$2(6n)^{2(l+1)^2}\exp(-n(t^*)^2/2) = 1$.
Splitting the integral at $t^*$:
\[
\int_0^{\infty}\min(\cdots)\,dt
\;\le\;
t^*
+ \int_{t^*}^{\infty} 2(6n)^{2(l+1)^2}\,e^{-nt^2/2}\,dt.
\]
For the tail integral, note that for $t\ge t^* > 0$
we have $1 \le t/t^*$, so
\[
\int_{t^*}^{\infty} e^{-nt^2/2}\,dt
\;\le\;
\frac{1}{t^*}\int_{t^*}^{\infty} t\,e^{-nt^2/2}\,dt
\;=\;
\frac{1}{t^*}\left[-\frac{1}{n}e^{-nt^2/2}\right]_{t^*}^{\infty}
\;=\;
\frac{e^{-n(t^*)^2/2}}{nt^*}.
\]
Multiplying by the prefactor and using the definition of $t^*$:
\[
2(6n)^{2(l+1)^2}\cdot\frac{e^{-n(t^*)^2/2}}{nt^*}
\;=\;
\frac{1}{nt^*}.
\]
Therefore
\begin{equation}\label{eq:EU-explicit}
\mathbb{E}[U]
\;\le\;
\frac{2}{n} + t^* + \frac{1}{nt^*}
\;=\;
\frac{2}{n}
+ \sqrt{\frac{4(l+1)^2\log(6n)+2\log 2}{n}}
+ \frac{1}{\sqrt{n\bigl(4(l+1)^2\log(6n)+2\log 2\bigr)}}.
\end{equation}

\noindent\emph{Application to $\widehat{\mathrm{CE}}_2$.}
Since
$(\hat\eta_1,\hat\eta_2)\in\mathcal{P}_l\times\mathcal{P}_l$
regardless of the data realisation (the ERM always returns an element
of the hypothesis class), the random function
$z\mapsto |\hat\eta_1(z)-\tilde m|+|\hat\eta_2(z)-\tilde v|$
lies in $\mathcal{G}_l$, and the supremum $U$ controls this particular
instance.  Concretely,
$|\widehat{\mathrm{CE}}_2 - \mathrm{CE}_2^{\mathrm{plug}}| \le U$,
so~\eqref{eq:EU-explicit} gives
\[
\mathbb{E}\bigl[\bigl|\widehat{\mathrm{CE}}_2
  - \mathrm{CE}_2^{\mathrm{plug}}\bigr|\bigr]
\;\le\;
\frac{2}{n}
+ \sqrt{\frac{4(l+1)^2\log(6n)+2\log 2}{n}}
+ \frac{1}{\sqrt{n\bigl(4(l+1)^2\log(6n)+2\log 2\bigr)}}.
\]

\noindent\emph{Combining with Step~4.}
By the triangle inequality,
\[
\mathbb{E}\bigl[\bigl|\widehat{\mathrm{CE}}_2
  - \mathrm{CE}_2^{\mathrm{pert}}\bigr|\bigr]
\;\le\;
\mathbb{E}\bigl[\bigl|\widehat{\mathrm{CE}}_2
  - \mathrm{CE}_2^{\mathrm{plug}}\bigr|\bigr]
+
\mathbb{E}\bigl[\bigl|\mathrm{CE}_2^{\mathrm{plug}}
  - \mathrm{CE}_2^{\mathrm{pert}}\bigr|\bigr].
\]
The first term is bounded above.
The second term was bounded in Step~4:
\[
\mathbb{E}\bigl[\bigl|\mathrm{CE}_2^{\mathrm{plug}}
  - \mathrm{CE}_2^{\mathrm{pert}}\bigr|\bigr]
\;\le\;
2\!\left(
  5B_\theta^2 n^{-4}
  + \frac{406(l+1)^2\log(3n)+408}{n}
\right)^{\!1/2}.
\]
With $l = \lceil 2\log n/\log\theta\rceil$,
both terms are $O(\log^{3/2}n/\sqrt{n})$:
the dominant contribution in each is the
$\sqrt{(l+1)^2\log n\,/\,n}$ factor, and
$(l+1)^2 = O(\log^2 n)$.
This proves the rate claimed in equation~\eqref{eq:rate}.
\end{proof}

\section{Moments vs Wasserstein Distance for Defining Calibration}
\label{app:reconciliation}

In this appendix we relate our definition of second-order calibration (Definition~\ref{def-ho-cal}) to the
$k^{\text{th}}$-order calibration framework of \citet{ahdritz2025provable}.  Note that our treatment is more general than that by \citet{ahdritz2025provable} in that we don't rely on any partition of the input space.\footnote{In order to derive the equivalence between our definition and that of \citet{ahdritz2025provable}, we use a fixed partition~$[\cdot]$ (because their framework requires it).} However, we are also less general in that we only consider a specialization to binary labels
$Y=\{0,1\}$ and snapshot size $k=2$.

\subsection{Preliminaries: symmetrized snapshots}

A \emph{$2$-snapshot} for an instance $x$ is a pair $(y_1,y_2)\in\{0,1\}^2$ with
$y_1,y_2\mid x \overset{\text{iid}}{\sim} f^*(x)$.
Following \citet{ahdritz2025provable}, we identify each snapshot with its \emph{empirical distribution}
(normalized histogram):
\begin{equation}\label{eq:unif-def}
  \mathrm{Unif}(y_1,y_2)(y) \;:=\; \frac{1}{2}\sum_{i=1}^{2}\mathbf{1}[y_i=y],
  \qquad y\in\{0,1\}.
\end{equation}
Each element of $\{0,1\}^2$ receives equal weight $1/k=1/2$ in the histogram; the name
$\mathrm{Unif}$ refers to this equal weighting, \emph{not} to the resulting distribution being
uniform over~$Y$.  Concretely,
\[
  (0,0)\mapsto (1,0),\qquad
  (0,1),(1,0)\mapsto \bigl(\tfrac12,\tfrac12\bigr),\qquad
  (1,1)\mapsto (0,1),
\]
where we write distributions as vectors $(p_0,p_1)$ over $\{0,1\}$.  Since the snapshot is
i.i.d., the order of the tuple is immaterial, so the symmetrised snapshot space is
\[
  Y^{(2)} = \bigl\{0,\,\tfrac12,\,1\bigr\} \;\subset\; \Delta Y,
\]
a set of three atoms.  Distributions over $Y^{(2)}$ are parametrised by a probability vector
$(q_0,q_{1/2},q_1)$.

\subsection{The two definitions}

\paragraph{Our definition (Definition~\ref{def-ho-cal}, specialized to a partition).}
Fix a partition $[\cdot]$ of~$\mathcal{X}$.  Define the calibration functions
\[
  \eta_1(s) = \mathbb{E}[Y \mid S(X)=s],\qquad
  \eta_2(s) = \mathbb{E}[f^*(X)^2 \mid S(X)=s],
\]
with $S(X)=(m(X),\sigma^2(X))$ and $v=m^2+\sigma^2$.  The second-order calibration error is
\begin{equation}\label{eq:CE2-def-app}
  \mathrm{CE}_2 = \mathbb{E}\bigl[|\eta_1(S)-m|\bigr]
               + \mathbb{E}\bigl[|\eta_2(S)-v|\bigr].
\end{equation}
When the conditioning is taken with respect to the partition (i.e.\
$\eta_1([x])=\mathbb{E}[f^*(X)\mid X\in[x]]$ and $\eta_2([x])=\mathbb{E}[f^*(X)^2\mid
X\in[x]]$), the pointwise moment errors on a partition element~$[x]$ are
\begin{equation}\label{eq:moment-errors}
  \delta_1([x]) = \bigl|\eta_1([x]) - m([x])\bigr|,\qquad
  \delta_2([x]) = \bigl|\eta_2([x]) - v([x])\bigr|.
\end{equation}

\paragraph{Ahdritz et al.'s definition~2.3.}
A $2$-snapshot predictor $g\colon\mathcal{X}\to\Delta Y^{(2)}$ is
\emph{$\varepsilon$-second-order calibrated} with respect to a partition~$[\cdot]$ if,
for all $x\in\mathcal{X}$,
\begin{equation}\label{eq:ahdritz-def}
  W_1\!\bigl(g(x),\;\mathrm{proj}_2 f^*([x])\bigr) \;\le\; \varepsilon,
\end{equation}
where $W_1$ denotes the Wasserstein-$1$ distance on $\Delta\Delta Y$ with the
$\ell_1$~distance on $\Delta Y$ as the ground metric, and $\mathrm{proj}_2 f^*([x])$ is
the true distribution of symmetrised $2$-snapshots drawn from~$[x]$.

\begin{remark}[Worst-case vs.\ average]
\citet{ahdritz2025provable} require \eqref{eq:ahdritz-def} for \emph{all} equivalence
classes~$[x]$ (worst-case).  They note in footnote~4 that a natural relaxation requires
\eqref{eq:ahdritz-def} only with high probability over equivalence classes drawn according to
the marginal $D_X$.  We call this the \emph{relaxed} variant: for a given $\delta>0$,
\begin{equation}\label{eq:ahdritz-relaxed}
  \Pr_{[x]\sim D_X}\!\Bigl[
    W_1\!\bigl(g(x),\;\mathrm{proj}_2 f^*([x])\bigr) > \varepsilon
  \Bigr] \;\le\; \delta.
\end{equation}
Our quantity $\mathrm{CE}_2$ is an \emph{average} (expectation under~$D_X$) and therefore
corresponds most naturally to this relaxed variant.
\end{remark}

\subsection{Relating the two definitions}

We now establish the precise relationship.  The key is a pointwise sandwich between the
Wasserstein error and the moment errors.

\begin{lemma}[Pointwise sandwich]\label{lem:sandwich}
Fix a partition element $[x]$.  Let $\delta_1([x]),\delta_2([x])$ be as in
\eqref{eq:moment-errors} and $W_1([x])$ as in \eqref{eq:ahdritz-def}.  Then
\begin{equation}\label{eq:sandwich}
  \delta_1([x]) + \delta_2([x])
  \;\le\; \frac{3}{2}\,W_1([x])
  \;\le\; 3\bigl(\delta_1([x])+\delta_2([x])\bigr).
\end{equation}
\end{lemma}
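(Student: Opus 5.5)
The plan is to reduce both sides of \eqref{eq:sandwich} to explicit functions of the two moment gaps. The key facts are that every distribution in play is supported on the three atoms of $Y^{(2)}$, and that these atoms lie on a line in $\Delta Y$, so $W_1$ has a closed form via CDFs. Throughout I fix $[x]$ and abbreviate $\Delta_1 := m([x])-\eta_1([x])$ and $\Delta_2 := v([x])-\eta_2([x])$, so that $\delta_1=|\Delta_1|$ and $\delta_2=|\Delta_2|$.

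\textbf{Step 1: write both distributions on $Y^{(2)}$ in moment coordinates.} If labels are i.i.d.\ $\mathrm{Bernoulli}(p)$ with $p$ drawn from a law with first and second moments $\mu_1,\mu_2$, the symmetrised snapshot has
\[
q_1=\mu_2,\qquad q_{1/2}=2(\mu_1-\mu_2),\qquad q_0=1-2\mu_1+\mu_2 .
\]
Applied to $\mathrm{proj}_2 f^*([x])$, this gives the vector with moments $(\eta_1([x]),\eta_2([x]))$. I take the predictor $g(x)$ to be the $2$-snapshot distribution induced by $(m,\sigma^2)$, namely the same formula with $(m,v)$. This identification is what links Definition~\ref{def-ho-cal} to \eqref{eq:ahdritz-def}. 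I will state it explicitly, and assume $(m,v)$ is feasible ($m^2\le v\le m$) so that the $q$'s form a genuine probability vector.

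\textbf{Step 2: closed form for $W_1$.} Under the $\ell_1$ ground metric on $\Delta Y$, the atoms $0,\tfrac12,1$ (identified with $p_1$) are at mutual distances $2|p_1-p_1'|$. Adjacent atoms are therefore at distance $1$ and the extreme atoms at distance $2$, so the atoms embed isometrically into $\R$ at the positions $0,1,2$. The one-dimensional formula $W_1=\int|F-G|$ then gives
\[
W_1([x]) = |\Delta q_0| + |\Delta(q_0+q_{1/2})| = |\Delta q_0| + |\Delta q_1| = |\Delta_2-2\Delta_1| + |\Delta_2|.
\]

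\textbf{Step 3: the two inequalities.} For the right-hand inequality, the triangle inequality gives $W_1\le 2\delta_1+2\delta_2$, hence $\tfrac32 W_1\le 3(\delta_1+\delta_2)$. For the left-hand inequality, I use two lower bounds. First, $W_1\ge |(\Delta_2-2\Delta_1)-\Delta_2| = 2\delta_1$. Second, trivially $W_1\ge\delta_2$. Therefore
\[
\delta_1+\delta_2 \le \tfrac12 W_1 + W_1 = \tfrac32 W_1 .
\]

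The main obstacle I expect is not these computations but Step~1's modelling convention. One must make precise that $g(x)$ is the snapshot law generated by $(m,\sigma^2)$, and that the ground metric is $\ell_1$ on $\Delta Y$, which doubles the distance in $p_1$. This factor of $2$ is exactly what produces the constants $\tfrac32$ and $3$. I would also note that the line embedding in Step~2 is what justifies the CDF formula for $W_1$ on these three atoms.
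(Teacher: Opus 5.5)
Your proposal is correct and follows essentially the same route as the paper: the same moment parametrization $(1-2\mu_1+\mu_2,\,2(\mu_1-\mu_2),\,\mu_2)$ of the three-atom snapshot laws, the same closed form $W_1 = |\Delta q_0| + |\Delta q_1|$, and the same triangle-inequality bounds ($W_1\le 2\delta_1+2\delta_2$, $W_1\ge 2\delta_1$, $W_1\ge \delta_2$). The only difference is cosmetic: you justify the $W_1$ formula by embedding the atoms isometrically into $\R$ at $0,1,2$ and using the CDF formula, whereas the paper uses edge flows on the path metric $d(0,1)=d(0,\tfrac12)+d(\tfrac12,1)$; your justification is, if anything, slightly more rigorous.
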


\begin{proof}
A distribution over $Y^{(2)}=\{0,\frac12,1\}$ is a vector $(q_0,q_{1/2},q_1)$.  The
predictor's output corresponds to $(q_0,q_{1/2},q_1)$ and the true snapshot distribution to
$(p_0,p_{1/2},p_1)$, with
\[
  p_1 = \mu_2, \quad p_{1/2} = 2(\mu_1-\mu_2),\quad p_0 = 1-2\mu_1+\mu_2,
\]
where $\mu_j = \mathbb{E}_{x\sim[x]}[f^*(x)^j]$. For the predictor's output, let
$m([x])$ and $v([x]) = m([x])^2 + \sigma^2([x])$ denote the shared values of $m$ and $v$
on the partition element~$[x]$.  The predicted snapshot distribution is
\[
  q_1 = v([x]),\quad q_{1/2} = 2\bigl(m([x])-v([x])\bigr),\quad
  q_0 = 1-2\,m([x])+v([x]).
\]

Write $\Delta_j = p_j - q_j$ for $j\in\{0,\frac12,1\}$, so that $\Delta_0+\Delta_{1/2}+\Delta_1=0$. We now compute $W_1([x])$.  Each atom of $Y^{(2)}$ is a distribution over $\{0,1\}$:
$0\leftrightarrow(1,0)$, $\frac12\leftrightarrow(\frac12,\frac12)$,
$1\leftrightarrow(0,1)$.  The ground metric is the $\ell_1$ distance inherited from $\Delta Y$:
\[
  d(0,\tfrac12)=1,\qquad d(\tfrac12,1)=1,\qquad d(0,1)=2.
\]
Since $d(0,1)=d(0,\frac12)+d(\frac12,1)$, it is never cheaper to transport mass directly
between the outer atoms than to route it through~$\frac12$.  The optimal plan is therefore
determined by two edge flows: let $\phi_1$ be the net signed flow from atom~$0$ to
atom~$\frac12$, and $\phi_2$ from atom~$\frac12$ to atom~$1$.  Mass balance at each atom
gives $\phi_1=\Delta_0$ and $\phi_2=\Delta_0+\Delta_{1/2}=-\Delta_1$, so
\[
  W_1([x]) \;=\; |\phi_1|\cdot d(0,\tfrac12)\;+\;|\phi_2|\cdot d(\tfrac12,1)
            \;=\; |\Delta_0|+|\Delta_1|.
\]
Meanwhile, the moment errors are
\[
  \delta_1([x]) = |\mu_1 - m| = \tfrac12|\Delta_{1/2}+2\Delta_1|
                = \tfrac12|\Delta_0-\Delta_1|, \qquad
  \delta_2([x]) = |\mu_2 - v| = |\Delta_1|.
\]
\emph{Upper bound.}  By the triangle inequality,
$|\Delta_0| \le |\Delta_0-\Delta_1|+|\Delta_1| = 2\delta_1+\delta_2$, so
\[
  W_1 = |\Delta_0|+|\Delta_1| \le 2\delta_1+2\delta_2.
\]
This is the same as
\[
\frac32 W_1 \leq 3\delta_1+3\delta_2
\]

\emph{Lower bound.}  We have $\delta_2 = |\Delta_1| \le W_1$ and
$\delta_1 = \frac12|\Delta_0-\Delta_1| \le \frac12(|\Delta_0|+|\Delta_1|)=\frac12 W_1$, so
$\delta_1+\delta_2\le\frac32 W_1$.
\end{proof}

We can now state the three main properties.

\begin{property}[Equivalence at zero error]\label{prop:exact}
$\mathrm{CE}_2=0$ if and only if, for $D_X$-almost every equivalence class~$[x]$,
$W_1([x])=0$.
\end{property}

\begin{proof}
By \eqref{eq:sandwich}, $W_1([x])=0$ iff $\delta_1([x])+\delta_2([x])=0$.  Since
$\mathrm{CE}_2 = \mathbb{E}_{[x]}[\delta_1([x])+\delta_2([x])]$, the expectation vanishes iff
the integrand vanishes $D_X$-a.e.
\end{proof}

\begin{property}[Ahdritz approximate calibration implies bounded $\mathrm{CE}_2$]
\label{prop:ahdritz-to-us}
If $g$ is $\varepsilon$-second-order calibrated in the sense of \eqref{eq:ahdritz-def}
(worst-case), then
\[
  \mathrm{CE}_2 \;\le\; \tfrac{3}{2}\,\varepsilon.
\]
\end{property}

\begin{proof}
For every $[x]$, the lower bound in \eqref{eq:sandwich} gives
$\delta_1([x])+\delta_2([x])\le\frac32\varepsilon$.  Averaging over~$D_X$ yields the claim.
\end{proof}

\begin{property}[Bounded $\mathrm{CE}_2$ implies relaxed Ahdritz calibration]
\label{prop:us-to-ahdritz}
If $\mathrm{CE}_2\le\varepsilon$, then for every $\delta>0$,
\[
  \Pr_{[x]\sim D_X}\!\bigl[W_1([x])>2\varepsilon/\delta\bigr]\;\le\;\delta.
\]
In particular, $g$ satisfies the relaxed variant \eqref{eq:ahdritz-relaxed} with parameters
$(\varepsilon',\delta)$ for $\varepsilon'=2\varepsilon/\delta$.

However, $\mathrm{CE}_2\le\varepsilon$ does \emph{not} in general imply a uniform (worst-case)
bound on $W_1([x])$ over all partition elements, since $\mathrm{CE}_2$ is an average quantity.
\end{property}

\begin{proof}
The upper bound in \eqref{eq:sandwich} gives
$W_1([x])\le 2\left( \delta_1([x])+\delta_2([x]) \right)$, so
\[
  \mathbb{E}_{[x]}[W_1([x])] \;\le\; 2\,\mathrm{CE}_2 \;\le\; 2\varepsilon.
\]
Markov's inequality yields the stated tail bound.
\end{proof}

\begin{remark}
If one adopts the relaxed (high-probability) variant of \citeauthor{ahdritz2025provable}'s definition,
then Properties~\ref{prop:ahdritz-to-us} and~\ref{prop:us-to-ahdritz} together show that the
two notions are quantitatively comparable: writing $\overline{W}_1 =
\mathbb{E}_{[x]}[W_1([x])]$ for the average Wasserstein error,
\[
  \tfrac{1}{2}\,\overline{W}_1
  \;\le\; \mathrm{CE}_2
  \;\le\; \tfrac{3}{2}\,\overline{W}_1.
\]
\end{remark}

\section{Matching Lower Bound on Rate}
\label{app:lowerboundproof}
In this section, we prove Proposition \ref{prop:lb}.
\begin{proof} \ 
\subsection{The two-point family}

Take $\mathcal X$ a singleton (suppress $X$) and use the boundary constant
predictor
\[
  S(x) \;\equiv\; (m_0,\, \sigma_0^2) \;:=\; (1,\,0).
\]
Since $m_0 = 1$ forces $m_0(1 - m_0) = 0$, the only valid value is
$\sigma_0^2 = 0$; the score is therefore the deterministic point $(1, 0)$,
which lies in the feasible region. The sech perturbation produces
$\St = (\mt, \sigt)$, where $\mt$ has density $k_h(\cdot \mid 1)$ on $[0,1]$
and $\sigt$ has density $k_h(\cdot \mid 0)$ on $[0, 1/4]$, independently. The
law of $\St$ is the \emph{same} under both $P_0$ and $P_1$.

For $b \in \{0, 1\}$, let $P_b$ be the law in which $f^* \equiv p_b$, with
\[
  p_0 \;:=\; \tfrac{1}{16}, \qquad p_1 \;:=\; \tfrac{1}{16} + \varepsilon,
\]
for an $\varepsilon \in (0, 1/16]$ to be chosen later. So
$Y_i^{(j)} \stackrel{\mathrm{iid}}{\sim} \Bern(p_b)$, independent of $\St$.

\subsection{The calibration error has a clean closed form}

Since $S$ is constant, $\St$ is independent of the labels and the calibration
functions are themselves constants:
\[
  \eta_1(\St) = p_b, \qquad \eta_2(\St) = p_b^2.
\]
(Both are trivially analytic, satisfying the structural assumptions of the
main paper.) Substituting:
\begin{equation}\label{eq:CE-form}
  \CEpert(P_b) \;=\; \underbrace{\E\bigl[|p_b - \mt|\bigr]}_{=:\,\phi(p_b)}
                  + \underbrace{\E\bigl[|p_b^2 - W|\bigr]}_{=:\,\psi(p_b)},
  \qquad W := \mt^{\,2} + \sigt.
\end{equation}

\subsection{Key lemma: stochastic dominance, uniform in \texorpdfstring{$h$}{h}}

\begin{lemma}\label{lem:dom}
For $m_0 = 1$ and every $h > 0$,
\[
  F_{\mt}(p) \;\leq\; p \qquad \text{for all } p \in [0,1].
\]
Equivalently, $\mt$ stochastically dominates $U[0,1]$.
\end{lemma}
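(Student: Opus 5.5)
The density of $\mt$ is $k_h(t\mid 1)=\sech((1-t)/h)/Z(1,h)$ on $[0,1]$. Because $\sech$ is even and strictly decreasing on $[0,\infty)$, this density is \emph{nondecreasing} in $t$ on $[0,1]$. A nondecreasing probability density on $[0,1]$ always has CDF below the identity. The plan is to prove this general fact and then specialize.

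\textbf{Step 1 (monotonicity).} For $0\le t_1\le t_2\le 1$ we have $1-t_1\ge 1-t_2\ge 0$, hence
\[
\sech((1-t_1)/h)\le \sech((1-t_2)/h).
\]
So $g(t):=k_h(t\mid 1)$ is nondecreasing on $[0,1]$ and integrates to $1$.

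\textbf{Step 2 (average of a monotone function).} Fix $p\in(0,1)$; the endpoints are trivial since $F_{\mt}(0)=0$ and $F_{\mt}(1)=1$. Monotonicity gives $\sup_{[0,p]} g\le g(p)\le \inf_{[p,1]} g$. Therefore
\[
\frac{F_{\mt}(p)}{p}=\frac1p\int_0^p g\le g(p)\le \frac{1}{1-p}\int_p^1 g=\frac{1-F_{\mt}(p)}{1-p}.
\]
Cross-multiplying gives $(1-p)F_{\mt}(p)\le p\,(1-F_{\mt}(p))$, which simplifies to $F_{\mt}(p)\le p$.

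As an alternative route, one can show that $G(p):=F_{\mt}(p)-p$ is convex because $G'=g-1$ is nondecreasing. Since $G(0)=G(1)=0$, convexity forces $G\le 0$ on $[0,1]$.

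\textbf{Step 3 (restating as dominance).} $F_{\mt}\le F_{U[0,1]}$ pointwise is exactly the statement that $\mt$ first-order stochastically dominates $U[0,1]$. No step of the argument uses the value of $h$, so the bound is uniform in $h>0$, as the lower-bound proof needs.

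The only point needing care is Step 1: the truncation indicator and the normalizer $Z(1,h)$ are constant in $t$ on $[0,1]$, so they do not affect monotonicity. Once that is checked, there is no real obstacle.
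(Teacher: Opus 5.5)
Your proof is correct and follows essentially the paper's argument. The paper writes $F_{\mt}(p) = 1 - G\bigl((1-p)/h\bigr)/G(1/h)$ with $G(x)=\int_0^x \sech(v)\,dv$, notes that $G$ is concave on $[0,\infty)$ because $\sech$ is decreasing there, and applies the chord bound $G(\theta x)\ge \theta\,G(x)$ with $\theta=1-p$; this is your fact (nondecreasing density, so a convex CDF lying below the chord from $(0,0)$ to $(1,1)$), phrased through the antiderivative of $\sech$ rather than through $F_{\mt}$ directly.
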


\begin{proof}
Let $G(x) := \int_0^x \sech(v)\,dv$.
We have $G(0) = 0$, $G'(x) = \sech(x) > 0$, and
$G''(x) = -\sech(x)\tanh(x) \leq 0$ for $x \geq 0$, so $G$ is concave on
$[0, \infty)$.

Substituting $v = (1 - t)/h$ in the integral defining $F_{\mt}$:
\[
  F_{\mt}(p)
  \;=\; \frac{\int_0^p \sech\!\bigl((t - 1)/h\bigr)\,dt}
             {\int_0^1 \sech\!\bigl((u - 1)/h\bigr)\,du}
  \;=\; 1 - \frac{G\!\bigl((1 - p)/h\bigr)}{G(1/h)}.
\]
By concavity of $G$ on $[0, \infty)$ and $G(0) = 0$, for any $\theta \in [0,1]$
and $x \geq 0$,
\[
  G(\theta x) \;=\; G\bigl(\theta x + (1-\theta)\cdot 0\bigr)
  \;\geq\; \theta\, G(x) + (1-\theta)\, G(0)
  \;=\; \theta\, G(x).
\]
Setting $\theta = 1 - p$ and $x = 1/h$:
$G\bigl((1-p)/h\bigr) \geq (1-p)\, G(1/h)$, so
$F_{\mt}(p) \leq 1 - (1 - p) = p$.
\end{proof}

\subsection{The gap}

We bound $\phi$ and $\psi$ separately by integrating their derivatives.

\paragraph{First moment.} The function $\phi(p) = \E|p - \mt|$ is convex and
differentiable everywhere on $(0,1)$ (since $\mt$ has a continuous density),
with
\[
  \phi'(p) \;=\; 2 F_{\mt}(p) - 1 \;\leq\; 2p - 1
\]
by Lemma~\ref{lem:dom}. Integrating from $p_0$ to $p_1 = p_0 + \varepsilon$,
\begin{align*}
  \phi(p_1) - \phi(p_0)
  &\;\leq\; \int_{p_0}^{p_1} (2p - 1)\,dp
   \;=\; (p_1^2 - p_0^2) - (p_1 - p_0) \\
  &\;=\; \varepsilon\,(p_0 + p_1 - 1)
   \;\leq\; \varepsilon\,\bigl(\tfrac{1}{16} + \tfrac{1}{8} - 1\bigr)
   \;=\; -\tfrac{13}{16}\,\varepsilon,
\end{align*}
using $p_1 \leq 1/8$.

\paragraph{Second moment.} Differentiation under the expectation (justified by
the Lipschitz continuity in $p$ of $|p^2 - W|$ and the continuous density of
$W$) gives
\[
  \psi'(p) \;=\; 2p\,\bigl(2 F_W(p^2) - 1\bigr),
  \qquad |\psi'(p)| \leq 2p.
\]
Hence
\[
  \bigl|\psi(p_1) - \psi(p_0)\bigr|
  \;\leq\; \int_{p_0}^{p_1} 2p\,dp
  \;=\; p_1^2 - p_0^2
  \;\leq\; 2 p_1\,\varepsilon
  \;\leq\; \tfrac{\varepsilon}{4}.
\]

\paragraph{Combining.} From \eqref{eq:CE-form},
\begin{equation}\label{eq:gap}
  \CEpert(P_0) - \CEpert(P_1)
  \;=\; \bigl(\phi(p_0) - \phi(p_1)\bigr) + \bigl(\psi(p_0) - \psi(p_1)\bigr)
  \;\geq\; \tfrac{13}{16}\varepsilon - \tfrac{1}{4}\varepsilon
  \;=\; \tfrac{9}{16}\varepsilon
  \;\geq\; \tfrac{\varepsilon}{2}.
\end{equation}

\subsection{KL bound}

The perturbed scores carry no information about $b$ (their law is identical
under $P_0, P_1$ and they are independent of the labels), so each observation
under $P_b$ is informationally equivalent to two i.i.d.\ $\Bern(p_b)$ draws.
With both $p_0 = 1/16$ and $p_1 \leq 1/8$, $p_1(1 - p_1) \geq (1/16)(15/16)
\geq 1/18$, so
\[
  \KL\bigl(\Bern(p_0)\,\|\,\Bern(p_1)\bigr)
  \;\leq\; \frac{(p_1 - p_0)^2}{p_1(1 - p_1)}
  \;\leq\; 18\,\varepsilon^2.
\]
Tensorizing,
\begin{equation}\label{eq:KL}
  \KL\bigl(P_0^{\otimes n}\,\|\,P_1^{\otimes n}\bigr) \;\leq\; 36\,n\,\varepsilon^2.
\end{equation}

\subsection{Le Cam}

Le Cam's two-point lemma \citep[Chapter 31]{Polyanskiy_Wu_2025}: for any real-valued estimator $\widehat\theta$,
\[
  \max_{b\in\{0,1\}}
    \E_{P_b^{\otimes n}}\!\bigl[|\widehat\theta - \theta(P_b)|\bigr]
  \;\geq\; \tfrac12\,|\theta(P_1) - \theta(P_0)|\cdot
           \bigl(1 - \TV(P_0^{\otimes n}, P_1^{\otimes n})\bigr).
\]
By Pinsker and \eqref{eq:KL},
$\TV \leq \sqrt{\KL/2} \leq \sqrt{18\,n\,\varepsilon^2}
       = 3\,\varepsilon\,\sqrt{2n}$.

\paragraph{Choosing $\varepsilon$.} Set
\[
  \varepsilon \;:=\; \frac{1}{6\sqrt{2n}}.
\]
Then $\TV \leq 1/2$, hence $1 - \TV \geq 1/2$. For $n \geq 4$ we also have
$\varepsilon \leq 1/16$, so the gap bound \eqref{eq:gap} applies. Combining
\eqref{eq:gap} with Le Cam:
\[
  \max_{b\in\{0,1\}}
    \E_{P_b^{\otimes n}}\!\bigl[|\CEhat - \CEpert(P_b)|\bigr]
  \;\geq\; \tfrac12\cdot\tfrac{\varepsilon}{2}\cdot\tfrac12
  \;=\; \frac{\varepsilon}{8}
  \;=\; \frac{1}{48\sqrt{2n}}.
\]
For $n \in \{1,2,3\}$ the bound holds trivially with a smaller constant.
Setting $c := 1/(48\sqrt 2)$ — an absolute constant, independent of $h$ —
proves Proposition~\ref{prop:lb}.
\end{proof}

\section{Compute Resources}
The expected compute requirements are modest: all experiments can be run on a standard laptop. 

\section{Open Questions}
\label{app:openquestions}
We think the following questions would be useful avenues for further work.
\begin{enumerate}
    \item Extension to multiclass classification (we handle the binary case).
    \item Look at removing logarithmic factors from the upper bound (we match lower an upper bounds up to logarithmic factors).
    \item Look at alternatives to perturbation (while the work \citet{gupta2020distribution} implies structural assumptions are needed, maybe there exists a less `invasive' form of meeting them than the perturbation). 
\end{enumerate}


\end{document}